\documentclass{article}

\usepackage{arxiv}

\usepackage[utf8]{inputenc} 
\usepackage[T1]{fontenc}    
\usepackage{hyperref}       
\usepackage{url}            
\usepackage{booktabs}       
\usepackage{amsfonts}       
\usepackage{nicefrac}       
\usepackage{microtype}      
\usepackage{lipsum}		
\usepackage{graphicx}
\usepackage{natbib}
\usepackage{doi}

\usepackage{xcolor}  
\usepackage{colortbl}
\usepackage{subfigure}
\usepackage{bbm}
\usepackage{caption}
\usepackage{subcaption}
\usepackage{algorithm}
\usepackage{algorithmic}

\usepackage{amsmath}
\usepackage{amsthm}

\usepackage{cleveref}
\usepackage{multirow}
\usepackage{bm}
\usepackage{enumitem}


\usepackage{wrapfig} 

\newtheorem{theorem}{Theorem}[section]
\newtheorem{proposition}[theorem]{Proposition}

\newtheorem{corollary}[theorem]{Corollary}

\newtheorem{assumption}[theorem]{Assumption}

\newenvironment{customprop}[1]
{\innercustomprop}{\endinnercustomprop}

\definecolor{OrangeRed}{RGB}{255, 69, 0}
\newcommand{\marktext}[2]{\colorbox{OrangeRed!20}{\makebox(#1, 0.6em){#2}}}

\title{RSPO: Regularized Self-Play Alignment \\of Large Language Models}



\author{ Xiaohang Tang$^*$ \\
 University College London \\ \texttt{xiaohang.tang.20@ucl.ac.uk} 
\\	
\And {Sangwoong Yoon\thanks{Equal Contribution}} \\
University College London  \\
\texttt{sangwoong.yoon@ucl.ac.uk} 
\\
\And  {Seongho Son}
\\  University College London \\ \texttt{seong.son.22@ucl.ac.uk}
\And  {Huizhuo Yuan} 
\\  University of California, Los Angeles \\
\texttt{hzyuan@cs.ucla.edu}
\And  {Quanquan Gu} 
\\  University of California, Los Angeles \\
\texttt{qgu@cs.ucla.edu}
\And {Ilija Bogunovic} 
\\ University College London \\ \texttt{i.bogunovic@ucl.ac.uk}
}

\date{}


\hypersetup{
pdftitle={Regularized Self-Play Policy Optimization},
pdfauthor={Xiaohang Tang, Afonso Marques, Parameswaran Kamalaruban, Ilija Bogunovic},
}

\begin{document}
\maketitle




\begin{abstract}
Self-play alignment has emerged as an effective approach for fine-tuning large language models (LLMs), formulating preference optimization as a two-player game. However, the regularization with respect to the reference policy, which is crucial for mitigating over-optimization, has been insufficiently investigated in self-play alignment. To study the impact of different regularization strategies, we propose \textbf{Regularized Self-Play Policy Optimization (RSPO)}, a general and modular framework that unifies prior methods and enables simple plug-and-play integration of various regularizers, meanwhile preserving convergence to Nash equilibrium of the corresponding regularized game.Our empirical study involving over $120$ fine-tuned Mistral-7B-Instruct models reveals that forward KL divergence regularization reduces response length, whereas reverse KL divergence markedly improves raw win rates. Crucially, RSPO regularized with a linear combination of forward and reverse KL divergence significantly boosts the length-controlled win rate on AlpacaEval-2 from $28.5\%$ (unregularized self-play, SPPO) to $35.4\%$, and consistently demonstrates superior performance on Arena-Hard, MT-Bench, ArmoRM scores, and response diversity. Combining simplicity, convergence guarantees, and significant empirical gains, RSPO offers a strong foundation for exploring regularized self-play in language model alignment.
\end{abstract}

\section{Introduction}

Self-play is a line of work conducting iterative self-competition of models, which has been demonstrated as an effective approach for improving AI systems \citep{goodfellow2020generative,wang2022diffusion}, particularly in strategic decision-making problems  \citep{silver2016mastering,heinrich2016deep,pinto2017robust,brown2018superhuman}. In the human alignment of LLMs, self-play recently started to be used and has shown superior empirical performance than other iterative Reinforcement Learning from Human Feedback (RLHF) methods on popular benchmarks \citep{dubois2024length,jiang2024textual,wu2024self,rosset2024direct}. By formulating the preference optimization problem as a two-player game, self-play alignment methods seek to identify a \emph{Nash Equilibrium} (NE) of the game in which utility is determined by a general preference model \citep{azar2024general, munos2023nash,calandriello2024human}. This NE is regarded as the most aligned LLM policy achieved without Bradley-Terry (BT) reward modeling \citep{david1963method}, which has shown under-performance compared to general preference modeling \citep{ye2024online}. 


\begin{figure}[t]
    \centering    \includegraphics[width=\linewidth]{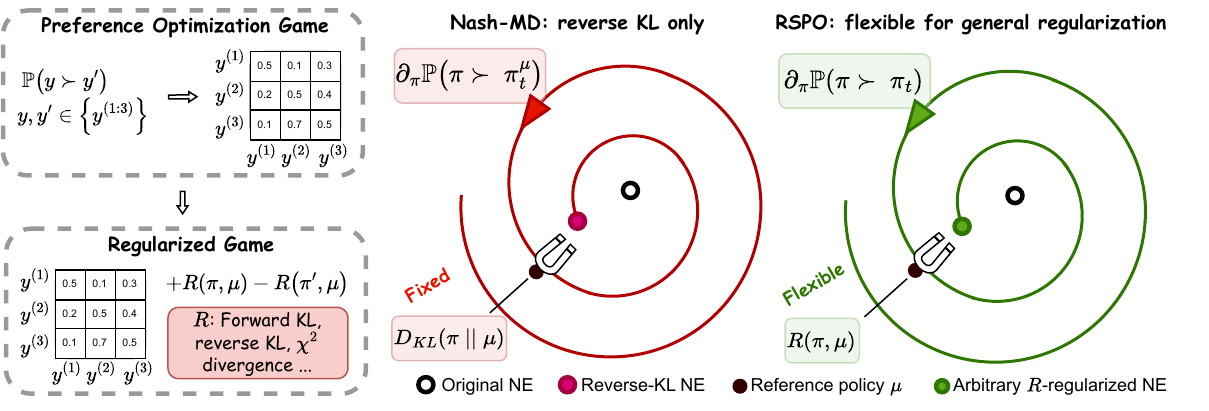}
    \caption{\textbf{RSPO is flexible for general regularization.} The estimation of Nash-MD policy update direction  $\partial_{\pi} \mathbb{P}(\pi\succ  \pi_t^\mu)$ requires samples from geometric mixture policy $\pi_t^{\mu}$. Such update approach is only compatible with reverse KL divergence for regularization.}
    \label{fig:rspo_overview}
\end{figure}

Despite the significant empirical improvements achieved through self-play, the impact of regularization to the reference policy—commonly used in RLHF to mitigate over-optimization—has received insufficient investigation in self-play alignment. Most existing self-play methods completely lack explicit regularization \citep{wu2024self,rosset2024direct,swamy2024minimaximalist,wang2024magnetic,gao2024rebel}. In practice, unregularized self-play is also susceptible to over-optimization, particularly when the preference model is inaccurate or misspecified. Although a few recent self-play approaches like Nash-MD \citep{munos2023nash} incorporate reverse KL divergence as a regularization penalty  \citep{calandriello2024human,wang2024magnetic,zhang2024iterativenashpolicyoptimization}, it remains unclear whether reverse KL is optimal for alignment, and the broader impact of alternative regularization strategies in self-play remains insufficiently explored. Moreover, the extension of current approaches to general forms of regularization is challenging, as their training protocols are intrinsically reliant on the reverse KL divergence for regularization \cite{munos2023nash} (see Figure \ref{fig:rspo_overview}). \looseness=-1


In this work, we introduce a flexible framework for incorporating diverse regularization methods into self-play alignment, termed \textbf{Regularized Self-Play Policy Optimization (RSPO)}:
\begin{itemize}[left=0pt]
    \item RSPO offers a simple way to apply various regularization strategies in self-play by \textbf{directly adding} the regularization term to our proposed unified self-play loss function, while maintaining \textbf{last-iterate convergence} to NE of the corresponding regularized preference optimization game. \looseness=-1
    \item We conduct empirical analysis on over $120$ fine-tuned Mistral-7B-Instrct models, which reveals distinct effects of different regularizations: forward KL regularization \textbf{reduces the response length}, whereas reverse KL regularization significantly \textbf{improves the raw win rate}.
    \item Consequently, we propose a linear combination of forward and reverse KL divergence, yielding a substantial improvement over the unregularized self-play alignment method (SPPO \citep{wu2024self}) on popular benchmarks: AlpacaEval-2.0, Arena-Hard-v0.1, and MT-bench. Particularly on AlpacaEval-2.0, \textbf{RSPO outperforms SPPO with improvement of $6.9\%$ in length-controlled win rate (LCWR)\looseness=-1}.
    \item Finally, we provide quantitative evaluations on other aspects of the generated responses, including \textbf{diversity} via self-BLEU \citep{zhu2018texygen}, and \textbf{instruction following, truthfulness, honesty, helpfulness} via ArmoRM \citep{wang2024interpretable}. Regularization demonstrates improvements on almost all aspects of the responses, indicating the comprehensive effectiveness of regularization in self-play alignment.
\end{itemize}


\section{Preliminaries}

We denote a prompt as $x$, a response as $y$, and a LLM policy as $\pi(y|x)$, where $\pi(\cdot| x) \in  \Delta_{\mathcal{Y}}$. We denote the set of all prompts as $\mathcal{X}$, and the set of all responses as $\mathcal{Y}=\{y^0,y^1,\cdots \}$. We use $\Delta_{\mathcal{Y}}$ to denote the probability simplex over the responses given a specific prompt. We parametrize the LLM policy $\pi$ as $\pi_\theta$. The reference policy is an LLM denoted as $\mu \in  \Delta^{\mathcal{X}}_{\mathcal{Y}}$. For notational brevity, we remove the dependence of policy $\pi$ and loss functions on the prompt $x$ throughout the paper.

\subsection{Game-Theoretic Preference Optimization}

We study the preference optimization problem in an online setting by formulating it as a two-player max-min game, as studied in previous self-play works \citep{wu2024self}. The players are two LLMs whose strategies are LLM policies, denoted as max-player $\pi$ and min-player $\pi'$. The utility of the max-player is expressed as the preference of itself over the min-player:
\begin{align}
u(\pi; \pi') = \mathbb{P}(\pi \succ \pi') \overset{\text{def}}{=} \mathbb{E}_{y\sim \pi, y' \sim \pi'}[\mathbb{P}(y \succ y')],
\label{eq:utility}
\end{align}
where $u: \Delta^{\mathcal{X}}_{\mathcal{Y}} \times \Delta^{\mathcal{X}}_{\mathcal{Y}} \rightarrow \mathbb{R}$ is \textit{linear} in $\pi$ and $\pi'$; $\mathbb{P}: \mathcal{X} \times \mathcal{Y} \times \mathcal{Y} \rightarrow [0, 1]$ is a general preference model that quantifies the preference of $y$ over $y'$ given a prompt. We extend the notation $\mathbb{P}(y \succ \pi')=\mathbb{E}_{y' \sim \pi'}[\mathbb{P}(y \succ y')]$. The objective is finding a \textit{NE} policy $\pi^*$ of the preference model:
\begin{align}
(\pi^*,\pi^*) = \arg \max_{\pi} \min_{\pi'} \mathbb{P}(\pi \succ \pi').
\label{eq:pm}
\end{align} 
Therefore, an NE strategy $\pi^*$ is an LLM that can generate \textit{the most preferred responses in expectation}, thus achieving human alignment based on the preference model. Most existing self-play alignment methods aim to solve this NE following Algorithm \ref{alg:selfplay} \citep{wu2024self,rosset2024direct,swamy2024minimaximalist,wang2024magnetic}.


\subsection{Preference Optimization via Multiplicative Weights Update}
An effective self-play method to solve the preference optimization game in \Cref{eq:pm} is Self-Play Policy Optimization (SPPO) \citep{wu2024self}. SPPO derives its loss function from the iterative no-regret learning algorithm, Multiplicative Weights Update (MWU) \citep{freund1997decision}. Specifically in a game setting, denote learning rate as $\eta$, and normalization constant $Z(\pi_t)$. In any iteration $t$, the policy update $\forall y \in \mathcal{Y}$ is
\(
\pi_{t+1}(y) = \pi_t(y) \cdot {\exp \big( \eta \mathbb{E}_{y' \sim \pi_t}[u({y}; y')]\big)}/{Z(\pi_t)},
\)
where $u(y;y')$ is the utility function defined in Equation \eqref{eq:utility}, with $y$ treated as a pure strategy.

The practical loss function of SPPO for policy update is then derived according to MWU:
\begin{align}
\mathcal{L}_{\text{SPPO}}(\theta) 
=  \mathbb{E}_{y \sim \pi_t} \Big[ \log \frac{\pi_{\theta}(y)}{\pi_t(y)} -  \Big(  \eta \mathbb{P}({y} \succ \pi_t) - \log Z(\pi_t) \Big) \Big]^2.
\label{eq:sppo_mse}
\end{align}

SPPO converges to the NE of the preference optimization game in \Cref{eq:pm}. However, after running multiple iterations, the deviation of the policy $\pi_\theta$ from $\mu$ can be large. Such deviation is particularly problematic when the preference model is only accurate at evaluating responses sampled from the reference policy \citep{munos2023nash}. Furthermore, in aligning LLMs in practice, the preference model is typically a surrogate $\hat{\mathbb{P}}$, such as PairRM \citep{llm-blender-2023}, which may be misspecified at some out-of-distribution responses and inaccurate due to estimation error or limited model expressiveness (PairRM is only a 0.4B model), causing over-optimization problem. Regularizing the policy optimization to a reference SFT model, which is typically trained on high-quality data \citep{ouyang2022training}, can mitigate the problem. We provide a synthetic example in Appendix \ref{append:reg_game} to demonstrate this problem. \looseness=-1


\subsection{Regularized Preference Optimization Game with Reference Policy}
\label{sec:reg_game}

To address the regularization in self-play, we adopt the objective in Nash Learning from Human Feedback \citep{munos2023nash}, and extend the KL divergence regularization to a general regularization function, to penalize the deviation from the reference policy. We define a \textit{convex} regularization function $R: \Delta^{\mathcal{X}}_{\mathcal{Y}} \times \Delta^{\mathcal{X}}_{\mathcal{Y}} \rightarrow (-\infty, \infty)$, where $R(\pi, \mu)$ measures the distance between $\pi$ and the reference model $\mu$, such as KL divergence $D_{\text{KL}}(\pi\|\mu)$. 
Denote regularization temperature as $\tau$, the objective becomes to optimize a \textit{regularized preference model} by solving the NE $(\pi^*,\pi^*)$ of the \textit{regularized} game, where the utility of max player is still $u(\pi;\pi')=\mathbb{P}(\pi \succ \pi')$: \looseness=-1
\begin{align}
\arg \max_{\pi} \min_{\pi'} \mathbb{P}(\pi \succ \pi') - \tau R(\pi, \mu) + \tau R(\pi', \mu).
\label{eq:rpm}
\end{align}
We provide proof of the existence and uniqueness of this NE in Appendix \ref{append:rpm_exists}. A few recent methods leverage Mirror Descent (MD), which is also in a self-play manner, to find a regularized NE in \Cref{eq:rpm} with last-iterate policy \citep{munos2023nash,calandriello2024human,zhang2024iterativenashpolicyoptimization}.

However, these MD-based methods are only compatible with the reverse KL divergence regularizer, and are non-trivial to extend to general divergence. For instance, Nash-MD\footnote{Throughout the paper, regularization specifically refers to the deviation of $\pi$ from $\mu$, rather than from $\pi_t$.} addresses the reverse KL regularization of $\pi$ and $\mu$ requiring responses generated from a geometric mixture policy $\pi_t^\mu(y) \propto 
{\pi_t(y)^{1 - \eta \tau} \mu(y)^{\eta \tau}}$ \citep{munos2023nash}, which is inherently compatible only with reverse KL divergence: \looseness=-1
\begin{align}
\pi_{t+1}=\arg\min_{\pi}  -\eta  \mathbb{E}_{\pi}[\nabla_\pi u(\pi_t; \textcolor{red} {\pi^\mu_t})] + D_{\text{KL}}(\pi \| \pi_t^\mu).
\label{eq:nash-md}
\end{align}
Therefore, while the LLMs optimized via existing self-play methods exhibit empirical improvement, they all have limited regularization of $\pi$ and $\mu$. The potential benefits of alternative regularization, such as adopting other $f$-divergences than reverse KL, remain unexplored.

\section{Regularized Self-Play Policy Optimization}
\label{sec:sp}
We propose a framework for regularized self-play alignment, namely \textbf{Regularized Self-Play Policy Optimization (RSPO)}. RSPO is simple and flexible for regularization, and provably convergent to Nash Equilibrium. The loss function of RSPO $\mathcal{L}_{\text{RSPO}}$ is defined as the sum of a mean-squared self-play loss and a weighted regularization term:
\begin{align}
\mathcal{L}_{\text{RSPO}}(\theta; G, B, R)\overset{\text{def}}{=} \mathbb{E}_{y \sim \pi_t} \Big[ \log \frac{\pi_{\theta}(y)}{\pi_t(y)} -  \eta \Big(   G({y}, \pi_t, \mu) - B(\pi_t, \mu) \Big) \Big]^2  \colorbox{cyan!20}{$+ \lambda 
 R(\pi_{\theta}, \mu)$},
\label{eq:RSPO}
\end{align}
where $G({y}, \pi_t, \mu)$, $B(\pi_t, \mu)$, and $R(\pi_{\theta}, \mu)$ are configurable components.
First, $G:\mathcal{Y} \times \Delta^{\mathcal{X}}_{\mathcal{Y}} \times \Delta^{\mathcal{X}}_{\mathcal{Y}} \rightarrow (-\infty, \infty)$ defines the \textit{update direction} of $\pi_{\theta}$, which can be set as the gradient of a utility function to guide the policy update towards increasing the utility. Second, the \textit{baseline} function $B :\Delta^{\mathcal{X}}_{\mathcal{Y}} \times \Delta^{\mathcal{X}}_{\mathcal{Y}} \rightarrow (-\infty, \infty)$ is for variance-reduction of $G$, similar to the baseline in REINFORCE \citep{williams1992simple}. 
Lastly, $R: \Delta^{\mathcal{X}}_{\mathcal{Y}} \times \Delta^{\mathcal{X}}_{\mathcal{Y}} \rightarrow \mathbb{R}$ is the regularization function. The coefficient $\lambda$ is the regularization temperature. 
The first Mean Square Error term in \Cref{eq:RSPO} can be interpreted as a self-play loss of conducting exponentiated gradient descent \citep{beck2003mirror}.


RSPO is a modular framework offering a simple way to introduce regularization into self-play alignment with \textit{only an additional term in the loss}. RSPO offers the simplicity and flexibility to incorporate \textit{various} regularization methods into self-play-based preference optimization methods. Additionally, we show in Section \ref{sec:general} that RSPO can generalize existing unregularized self-play methods without external regularization $R$. Thus, regularizing existing methods requires \textit{no change} to their original loss functions or hyperparameters, but simply adding an external plug-and-play regularization to their loss function and tuning the temperature $\lambda$. 


In practice, we set baseline function $B=\tfrac{1}{2}$ following Nash-MD and SPPO, and the update direction $G$ to be the gradient of the preference against $\pi_t$, $\forall y \in \mathcal{Y}$:
\begin{align}
G({y}, \pi_t, \mu)=\partial_{\pi(y)} \mathbb{P}(\pi \succ \pi_t) = \mathbb{P}(y \succ \pi_t).
\end{align}
We execute Algorithm \ref{alg:selfplay} by applying the following RSPO loss with any regularization $R$ of interests:
\begin{align}
&\mathcal{L}_{\text{RSPO}}\big(\theta; G=\mathbb{P}(y \succ \pi_t), B=\tfrac{1}{2}, R \big) .
\label{eq:RSPO_gmmd}
\end{align}
In theory, $B$ helps minimize the variance of $G$ the most when $B= \mathbb{E}_{y\sim \pi_t}[G(y, \pi_t, \mu)]$. But in preference optimization, due to the typically small minibatch size, the estimation error of the mean of $G$ could be large, leading to additional estimation error of the loss. Thus, we also set the baseline value for variance reduction to be a constant $\tfrac{1}{2}$, the mean value of $G$ when the algorithm converged. For the implementation of various divergence-based regularization, refer to Appendix \ref{sec:implement_reg}.

In the following sections, we first illustrate the generalizable formulation of RSPO, so that it can be implemented without modifying the existing self-play component. We then establish theoretical convergence guarantees for RSPO grounded in Mirror Descent theory.

\subsection{Generalizing Existing Self-Play Methods}
\label{sec:general}



In this section, we show how RSPO generalize existing self-play methods, which showcase (1) implementing RSPO requires only one additional term to existing self-play loss functions; (2) the limitation of existing regularized methods.
First, the unregularized self-play method SPPO \citep{wu2024self} has a loss function defined in \Cref{eq:sppo_mse} equivalent to RSPO \textit{without external regularization}:
\begin{align}
\mathcal{L}_{\text{SPPO}}(\theta) 
= \mathcal{L}_{\text{RSPO}}\Big(\theta; G=\mathbb{P}(y \succ \pi_t), B=\frac{1}{2},
\colorbox{red!20}{$R=0$} \Big).
\label{eq:sppo_sp}
\end{align}
According to \Cref{eq:RSPO_gmmd} and \Cref{eq:sppo_sp}, $\mathcal{L}_{\text{RSPO}}= \mathcal{L}_{\text{SPPO}} + \lambda R(\pi_{\theta}, \mu)$, i.e. the implementation of RSPO is equivalent to directly add the regularization $R$ to the loss function of SPPO (\Cref{eq:sppo_mse}). This implies that the additional regularization term becomes plug-and-play, requiring minimal changes to existing training pipeline.


In addition, existing regularized methods can be generalized by $\mathcal{L}_{\text{RSPO}}$ (derivations in Appendix \ref{append:nashmd_proof}): 
\begin{align}
\nabla_\theta \mathcal{L}_{\text{Nash-MD}}(\theta) 
=& \nabla_\theta \mathcal{L}_{\text{RSPO}}\big(\theta;G=\mathbb{P}(y\succ  \pi_t^\mu), B=\frac{1}{2}, R=D_{\text{KL}}(\pi_\theta\|\mu) \big) \label{eq:rspo_nashmd_gen1} \\
=& \nabla_\theta \mathcal{L}_{\text{RSPO}}\Big(\theta;G=\mathbb{P}(y\succ  \pi_t^\mu) - \tau \log \frac{\pi_t(y)}{\mu(y)}, B=\frac{1}{2},
R=0 \Big) \label{eq:rspo_nashmd_gen2}.
\end{align}
\Cref{eq:rspo_nashmd_gen1} verifies our summarization shown in Figure \ref{fig:rspo_overview}. The convergence guarantee of Nash-MD \citep[Lemma~2]{munos2023nash} requires the policy updated with \Cref{eq:nash-md}, which is specifically designed for reverse KL regularization, as other $R$ can not be merged with $D_{\text{KL}}(\pi \| \mu)$ to a regularization w.r.t. geometric mixture $\pi^\mu_t$. 
Additionally, \Cref{eq:rspo_nashmd_gen2} demonstrates that RSPO enables to even add extra regularization to existing regularized self-play methods. 

\subsection{Theoretical Guarantees}
\label{sec:theory}

In this section, we examine the theoretical properties of RSPO, with a particular emphasis on its convergence guarantee. We adopt Mirror Descent (MD) as the foundational framework, given its well-established last-iterate convergence to the NE.

We build upon Magnetic Mirror Descent (MMD) \citep{sokota2022unified}, a specialized variant of MD that guarantees convergence to a reverse-KL-regularized NE. To generalize beyond reverse-KL regularization, we introduce Generalized Magnetic Mirror Descent (GMMD), which can accommodate a broader class of regularization techniques. By demonstrating that optimizing the RSPO loss is equivalent to performing reinforcement learning (RL) within the GMMD framework, we establish a formal connection between RSPO and GMMD. This connection ensures the last-iterate convergence of RSPO to the NE of the corresponding \textit{regularized} game.

\textbf{Tabular GMMD.} Denote the utility function of the game as $U$, define $G$ as the element of the vector of partial derivatives of $U$ w.r.t. policy: 
\begin{align}
G(y;\pi') \overset{\text{def}}{=} \partial_{\pi(y)} U(\pi;\pi'),\ \partial_\pi U(\pi;\pi') = (G(y^0;\pi'),\cdots, G(y^{|\mathcal{Y}|};\pi') )^\top \in \mathbb{R}^{|\mathcal{Y}|}
\end{align}
Then in iteration $t$, GMMD updates policy as \looseness=-1
\begin{align}
\pi_{t+1} 
= \arg \min_{\pi} -\eta \mathbb{E}_{\pi}[G(y;\pi_t)] + B_{\psi}(\pi; \pi_t) + \tau R(\pi, \mu),
\label{eq:gmmd}
\end{align}
where $\tau$ is regularization temperature, $R$ is a general regularization function, serving as a ``magnet'' to attract $\pi$ to $\mu$ during policy updating. $B_{\psi}$ is the Bregman Divergence generated by a convex potential function $\psi$ \citep{bregman1967relaxation}.

Notably, the vanilla Magnetic Mirror Descent limits $R$ to be the same regularization method of $\pi$ and $\pi_t$, i.e., $R=B_{\psi}$ \citep[Section~3.2]{sokota2022unified}; whereas in this paper we aim at a general regularizer of $\pi$ and $\mu$, which could be different from $B_{\psi}$, and study the effects of different regularization methods.

\begin{proposition}[\textbf{Last-iterate Convergence}] If $R(\cdot, \mu)$ is $1$-strongly convex relative to $\psi$, $\eta \leq \tau$, and $U$ is linear, then policy updated by GMMD in \Cref{eq:gmmd} has last-iterate convergence to the following regularized NE
\(\max_{\pi} \min_{\pi'} U(\pi; \pi') - \tau R(\pi, \mu) + \tau R(\pi', \mu).\)
\label{prop:RSPO}
\end{proposition}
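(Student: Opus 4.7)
The plan is to piggy-back on the Magnetic Mirror Descent (MMD) last-iterate analysis of Sokota et al.~(2022), promoting it to the setting where the regularizer $R$ is not required to coincide with the Bregman potential. The only place in the MMD proof where the identity $R = B_\psi$ is used is in extracting a strong-convexity bound for the magnet term; by assumption $R(\cdot,\mu)$ is $1$-strongly convex relative to $\psi$, so exactly that bound is available and the rest of the MMD machinery goes through.

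First I would write the first-order optimality condition for the GMMD update \eqref{eq:gmmd}. Since $\mathbb{E}_\pi[G(y;\pi_t)] = \langle \pi, \partial_\pi U(\pi_t;\pi_t)\rangle$ is linear in $\pi$, variational inequality at $\pi_{t+1}$ gives, for every feasible $\pi$,
\begin{equation*}
\bigl\langle -\eta\,\partial_\pi U(\pi_t;\pi_t) + \nabla\psi(\pi_{t+1}) - \nabla\psi(\pi_t) + \tau\,\nabla R(\pi_{t+1},\mu),\; \pi - \pi_{t+1}\bigr\rangle \geq 0.
\end{equation*}
Plugging in $\pi = \pi^*$ (the regularized NE of interest) and applying the Bregman three-point identity
$\langle \nabla\psi(\pi_{t+1}) - \nabla\psi(\pi_t), \pi^* - \pi_{t+1}\rangle = B_\psi(\pi^*;\pi_t) - B_\psi(\pi^*;\pi_{t+1}) - B_\psi(\pi_{t+1};\pi_t)$
gives a clean inequality relating $B_\psi(\pi^*;\pi_{t+1})$ to $B_\psi(\pi^*;\pi_t)$ plus a utility term and a regularization term.

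Next I would invoke the relative strong convexity hypothesis. Since $R(\cdot,\mu)$ is $1$-strongly convex relative to $\psi$,
\begin{equation*}
\langle \nabla R(\pi_{t+1},\mu),\,\pi^* - \pi_{t+1}\rangle \leq R(\pi^*,\mu) - R(\pi_{t+1},\mu) - B_\psi(\pi^*;\pi_{t+1}).
\end{equation*}
This is exactly the step where the MMD argument used $R = B_\psi$; with the relative strong-convexity assumption the same inequality is recovered without requiring $R$ to be a Bregman divergence at all. Substituting yields
\begin{equation*}
(1+\eta\tau)\,B_\psi(\pi^*;\pi_{t+1}) \;\leq\; B_\psi(\pi^*;\pi_t) + \eta\bigl[ \langle \pi^* - \pi_{t+1},\, \partial_\pi U(\pi_t;\pi_t)\rangle - \tau\bigl(R(\pi_{t+1},\mu) - R(\pi^*,\mu)\bigr)\bigr] - B_\psi(\pi_{t+1};\pi_t).
\end{equation*}
Repeating the same derivation for the min-player (which in a symmetric two-player game follows from a mirror copy of the update) and summing lets me invoke the saddle-point inequality of the regularized game: the bracketed term, when summed across both players, is non-positive at $\pi^*$ by definition of the NE and by bilinearity of $U$. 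Dropping the non-positive Bregman residual $B_\psi(\pi_{t+1};\pi_t)$ then yields the contraction
\begin{equation*}
B_\psi(\pi^*;\pi_{t+1}) \;\leq\; \frac{1}{1+\eta\tau}\, B_\psi(\pi^*;\pi_t),
\end{equation*}
provided $\eta\leq\tau$, which is precisely the regime that keeps the linear-utility cross-term dominated by the strongly convex magnet term. Iterating gives geometric last-iterate convergence of $\pi_t$ to $\pi^*$ in $B_\psi$, and hence in the norm underlying $\psi$.

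The main obstacle, and where I would spend the most care, is the handoff between the utility cross-term and the magnet term. In MMD the two Bregman objects are literally the same, so the algebra collapses; here one must keep track of the fact that $R$ and $B_\psi$ live on the same scale only up to the relative strong convexity constant, which forces the condition $\eta \leq \tau$ to appear exactly as stated. A secondary subtlety is the reduction from the two-player saddle point to a single contraction in $B_\psi(\pi^*;\pi_t)$; because $U$ is linear (hence bilinear across the two players) and the regularizer is separable across the two sides as $-\tau R(\pi,\mu) + \tau R(\pi',\mu)$, the symmetric update of both players allows the cross-terms to cancel in the Nash condition, which is the structural ingredient that makes the single-sequence last-iterate statement possible.
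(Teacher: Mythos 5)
Your proposal is correct and is in substance the same argument the paper uses: the paper's proof simply checks the hypotheses of Theorem~3.4 of Sokota et al.\ (2022) (linearity of $U$ makes the operator $F=-\partial_\pi U$ monotone and smooth, and the stated hypothesis supplies the $1$-relative strong convexity of $R$) and then cites that theorem, whereas you unpack the proof of that theorem — first-order optimality, three-point identity, relative strong convexity, and the monotonicity/NE inequality — to re-derive the $\tfrac{1}{1+\eta\tau}$ contraction. The one point to correct is your framing: Theorem~3.4 of Sokota et al.\ is already stated for a general regularizer $g$ that is $1$-strongly convex relative to $\psi$ (the restriction $R=B_\psi$ is only the ``magnetic'' special case of their Section~3.2), so the relaxation you present as the crux is already available off the shelf, which is precisely why the paper's proof can be a direct application rather than a re-derivation.
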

Proposition \ref{prop:RSPO} is a direct application of Theorem 3.4 by \citet{sokota2022unified}, which guarantees the last-iterate convergence of GMMD to the NE of a regularized game (Proof in Appendix \ref{append:rspo_proof}).  


\textbf{Deep RL Implementation of GMMD.} To adapt GMMD to preference optimization problems, RL techniques are commonly employed as practical implementations, as for many MD update \citep{tomar2020mirror,munos2023nash,wang2024magnetic}. Define the loss function of conducting GMMD in preference optimization as
\begin{align}
\mathcal{L}_{\text{GMMD}}(\theta) \overset{\text{def}}{=} -\eta \mathbb{E}_{\pi_\theta}\big[G(y;\pi_t)\big] + D_{\text{KL}}(\pi_\theta|| \pi_t)+ \tau R(\pi_\theta, \mu).
\label{eq:gmmd_loss_def}
\end{align}
Here, we set the Bregman divergence to Reverse KL in preference optimization as in previous works \citep{munos2023nash,zhang2024iterativenashpolicyoptimization}. The gradient estimation of $\mathcal{L}_{\text{GMMD}}(\theta)$ for policy updates is required since the expectation in the first term is dependent on $\pi_\theta$. Following Policy Gradient (PG) theorem \citep{sutton1999policy}, the PG of GMMD is equal to $\nabla_\theta\mathcal{L}_{\text{RSPO}}(\theta)$ up to multiplying a constant:
\begin{align}
\nabla_\theta \mathcal{L}_{\text{GMMD}}(\theta)= \mathbb{E}_{y \sim \pi_\theta}\bigg[\nabla_\theta \log \pi_\theta(y)\bigg( -\eta G(y;\pi_t) + \log \frac{\pi_\theta(y)}{ \pi_t(y)} + B \bigg)\bigg]  + \tau \nabla_\theta  R(\pi_\theta, \mu),
\label{eq:pg}
\end{align}
where $B$ is a baseline function to reduce the variance as in REINFORCE \citep{williams1992simple}. We set $B$ independent to $\theta$ so that adding $B$ does not affect the value of \Cref{eq:gmmd_loss_def}, due to $\mathbb{E}_{y \sim \pi_\theta}[\nabla_\theta \log \pi_\theta(y) \cdot \eta B] = \eta B\nabla_\theta \mathbb{E}_{y \sim \pi_\theta}[1]=0$. 


Due to the equivalence between RSPO and GMMD, we provide the convergence guarantee for our practical implementation of RSPO (\Cref{eq:gmmd}), to the Nash equilibrium of the regularized preference optimization game as follows (Proof in Appendix \ref{append:rspo_converge}).
\begin{corollary}
Self-play following Algorithm \ref{alg:selfplay} with the RSPO loss function in \Cref{eq:RSPO_gmmd} and regularizer $R$ satisfying the assumption in Proposition \ref{prop:RSPO}, has last-iterate convergence to the NE of the regularized preference optimization game, as described in \Cref{eq:rpm}.
\label{coro:rspo_converge}
\end{corollary}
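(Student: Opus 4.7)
The plan hinges on recognizing that minimizing the RSPO loss performs exactly the update prescribed by Generalized Magnetic Mirror Descent (GMMD), so that Proposition \ref{prop:RSPO} applies verbatim. I proceed in three steps: verify the hypotheses of Proposition \ref{prop:RSPO}, reduce the RSPO update to the GMMD update, and transport the last-iterate guarantee.

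First I would check the hypotheses of Proposition \ref{prop:RSPO} under the instantiation of RSPO given in \Cref{eq:RSPO_gmmd}. With $G(y;\pi_t)=\partial_{\pi(y)}\mathbb{P}(\pi\succ\pi_t)=\mathbb{P}(y\succ\pi_t)$ and $B=\tfrac{1}{2}$, the underlying utility $U(\pi;\pi')=\mathbb{P}(\pi\succ\pi')$ is linear in both arguments as noted below \Cref{eq:utility}. The Bregman divergence in the deep-RL realization of GMMD is the reverse KL generated by the negative-entropy potential $\psi$, and by hypothesis of the corollary $R(\cdot,\mu)$ is $1$-strongly convex relative to $\psi$ with $\eta\le\tau$. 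These are precisely the premises of Proposition \ref{prop:RSPO}, which therefore yields last-iterate convergence of the GMMD iterates defined by \Cref{eq:gmmd} to the unique NE of the regularized game in \Cref{eq:rpm}.

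Next I would reduce the RSPO update to the GMMD update. Differentiating the mean-squared term of $\mathcal{L}_{\text{RSPO}}$ yields, up to a positive multiplicative constant, an expression of the form
\[
\mathbb{E}_{y\sim\pi_t}\Big[\Big(\log\tfrac{\pi_\theta(y)}{\pi_t(y)}-\eta\big(G(y;\pi_t)-B\big)\Big)\nabla_\theta\log\pi_\theta(y)\Big],
\]
and adding $\lambda\,\nabla_\theta R(\pi_\theta,\mu)$ from the regularization term recovers precisely the policy gradient expression for $\nabla_\theta\mathcal{L}_{\text{GMMD}}$ derived in \Cref{eq:pg} with $\lambda=\tau$. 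Consequently $\mathcal{L}_{\text{RSPO}}$ and $\mathcal{L}_{\text{GMMD}}$ share the same stationary points, and in the realizable tabular regime both are minimized by the unique GMMD iterate $\pi_{t+1}$ characterized by \Cref{eq:gmmd}. Composing this per-iteration equivalence with the last-iterate convergence of the GMMD sequence from the previous step then yields the corollary.

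The main obstacle is justifying the per-iteration equivalence between the two losses: the RSPO MSE samples $y\sim\pi_t$ whereas the GMMD policy gradient samples $y\sim\pi_\theta$, so the two gradients are not pointwise identical away from the optimum. One must argue that any minimizer of $\mathcal{L}_{\text{RSPO}}$ drives the residual $\log(\pi_\theta(y)/\pi_t(y))-\eta(G-B)$ to a constant on the support of $\pi_t$ (the constant being absorbed into normalization), which together with the common regularizer pins down the same softmax/Boltzmann update as the closed-form GMMD iterate. Making this identification precise, and ensuring that the parametric optimization inherits the tabular GMMD analysis (e.g.\ via sufficient expressivity of $\pi_\theta$ and exact inner-loop optimization), is the crux of rendering the reduction rigorous; once in place, Proposition \ref{prop:RSPO} delivers the conclusion.
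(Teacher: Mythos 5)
Your proposal follows essentially the same route as the paper: verify the hypotheses of Proposition \ref{prop:RSPO}, show that $\nabla_\theta\mathcal{L}_{\text{RSPO}}$ coincides with $\nabla_\theta\mathcal{L}_{\text{GMMD}}$ up to a positive constant, and transport the last-iterate guarantee. The sampling mismatch you flag ($y\sim\pi_t$ versus $y\sim\pi_\theta$) is resolved in the paper exactly as you suggest—by noting that the loss is evaluated at the start of each iteration where $\pi_\theta=\pi_t$, following the SPPO convention—so your identification of this as the crux matches the paper's (equally informal) treatment.
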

RSPO guarantees NE convergence while allowing flexible regularization strategies, making it a robust extension of self-play optimization. In summary, the proposed RSPO framework provides a generalized approach that simplifies the incorporation of regularization into existing self-play methods while maintaining theoretical soundness.

\section{Experiments}
\label{sec:experiments}

In this section, we answer the following important questions of regularization in the self-play alignment of Large Language Models (LLMs) by testing on various popular benchmarks:
\begin{itemize}[left=0pt]
\item \textbf{Q1:} Does regularization improve the performance of self-play alignment (Sec. \ref{sec:main_exp})?  
\item \textbf{Q2:} Which regularization method is the most effective in self-play alignment (Sec. \ref{sec:main_reg})? 
\item \textbf{Q3:} What additional advantages can be obtained by regularization in self-play (Sec. \ref{sec:diversity})?
\end{itemize}

\textbf{Experiment Setup.} We evaluate our methods mainly on benchmarks AlpacaEval \citep{dubois2024length}, Arena-Hard \citep{li2024crowdsourced}, and MT-Bench \citep{zheng2023judging}, and test the response generation diversity and quality via self-BLEU \citep{zhu2018texygen} and ArmoRM \citep{wang2024interpretable}, respectively. We follow the experiment setup of SPPO and Snorkel-Mistral-PairRM-DPO
(Snorkel) \citep{viethoangtranduong} to examine our regularization methods, where Snorkel is based on iterative DPO and has achieved strong performance on AlpacaEval. Our \textit{reference policy} model and \textit{base model} are both Mistral-7B-Instruct-v0.2. 
Since iterative self-play methods require no response data for training, we only use the \textit{prompts of the Ultrafeedback dataset} \citep{cui2023ultrafeedback}, whose size is $\sim 60$K. Following SPPO and Snorkel, we split the prompts into three subsets and use only one subset per iteration to prevent over-fitting. To understand the later-iterate performance of self-play, in section \ref{sec:main_exp}, we also train on the single fold of the prompts iteratively. We use a 0.4B response-pair-wise \textit{preference model} PairRM \citep{llm-blender-2023}, evaluated as comparable to $10\times$ larger reward/preference models \citep{cui2023ultrafeedback}.\looseness=-1

\textbf{Implementations and Baselines.}  The implementation of self-play methods follows Algorithm \ref{alg:selfplay}. In each iteration, given response-pair-wise preference from PairRM and $K=5$ number of response samples from the current policy, we estimate the policies' preference $\mathbb{P}(\pi \succ \pi_t)$ and regularization via Monte-Carlo estimation to compute the loss function. We replicate SPPO with the default hyper-parameters and extend it to $9$ iterations. We implement RSPO as described in \Cref{coro:rspo_converge}. The implementation of regularizations in RSPO is demonstrated in Appendix \ref{sec:implement_reg} using the $K$ samples. We report some of the baseline results from the previous papers, including SPPO, Snorkel (Mistral-PairRM-DPO) \citep{viethoangtranduong}, Mistral-7B (Instruct-v0.2) \citep{jiang2023mistral}, iterative DPO by \citet{wu2024self}, and SimPO \cite{meng2024simpo}. Since the SPPO paper only provides results across $3$ iterations \citep{wu2024self}, we replicate SPPO as an important baseline to study the performance across more than $3$ iterations. 

\begin{figure}[t!] 

    \centering 
    \vspace{-.2cm}
    \includegraphics[width=.49\linewidth]{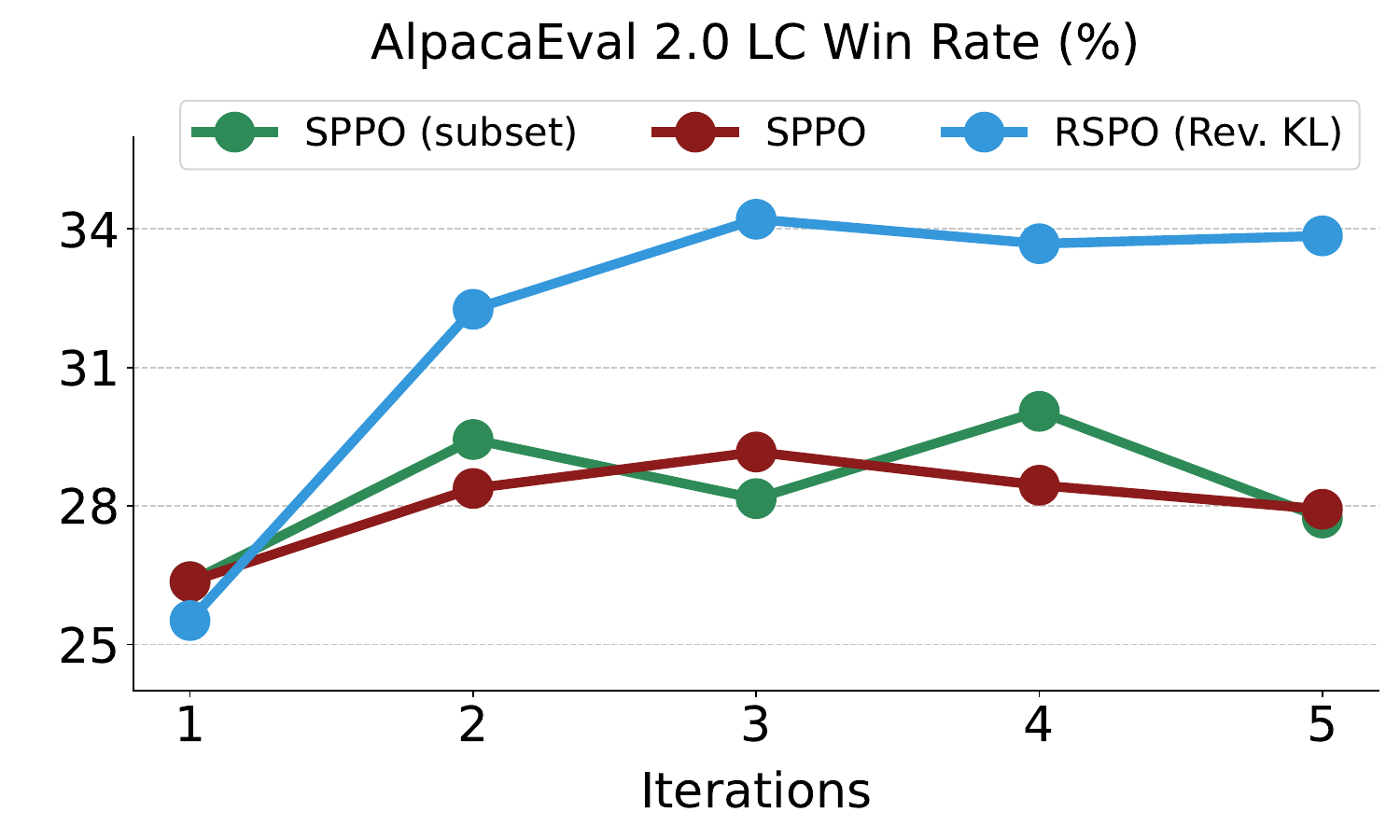}
    \includegraphics[width=.49\linewidth]{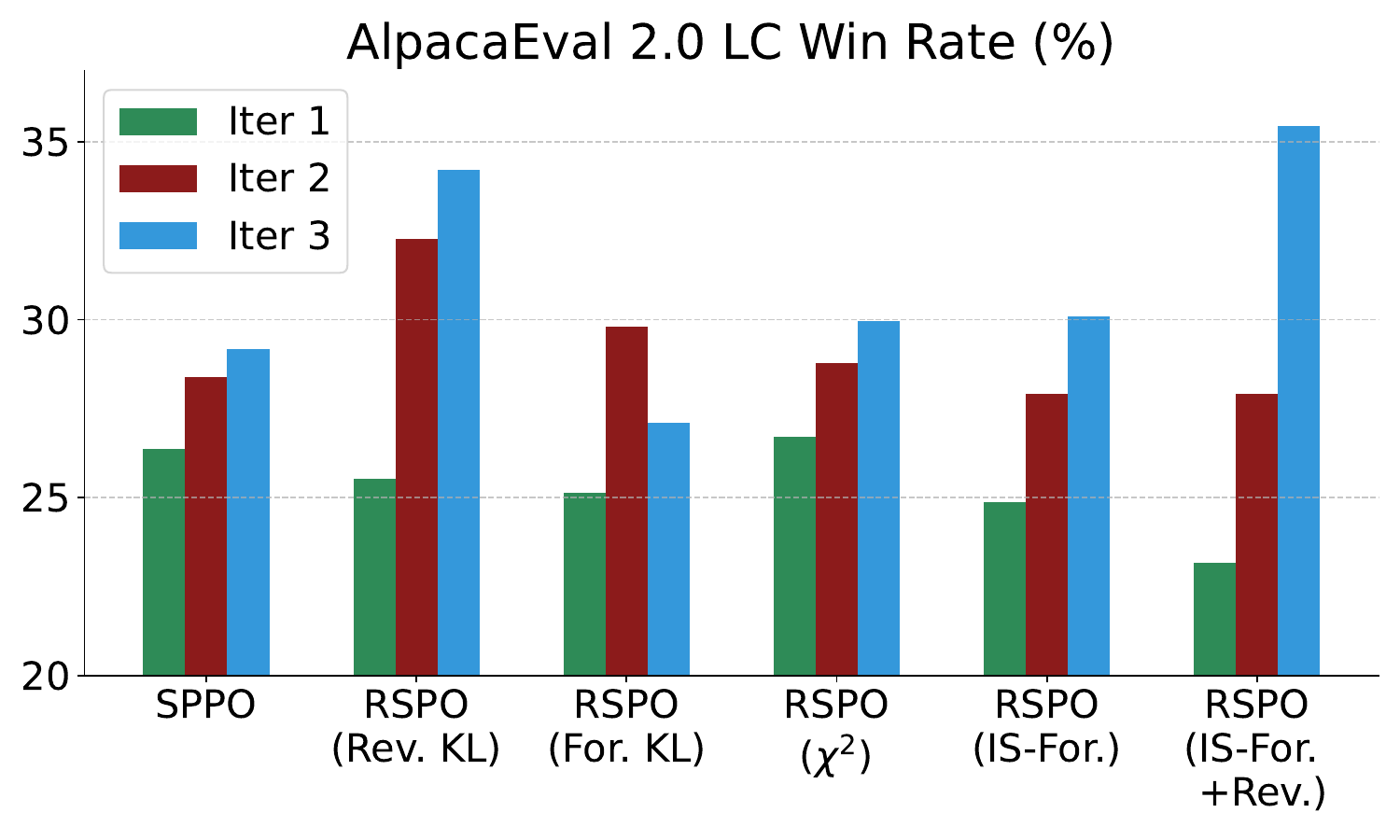}
    
    \caption{\textbf{Left:}  Length-controlled win rate (LCWR) across iterations for unregularized self-play method: SPPO, SPPO trained on a subset of the data: SPPO (subset), and reverse-KL regularized RSPO: RSPO (Rev. KL). \marktext{28em}{SPPO starts to degrade from iteration 3 due to overoptimization, RSPO} \marktext{17em}{with reverse KL regularization mitigates it.} \textbf{Right:} LCWR of SPPO and RSPOs in AlpacaEval-2.0. \marktext{35em}{RSPO (IS-For.+Rev.) outperforms SPPO and others including reverse KL regularization.}}
    \label{fig:sppo_overopt}
\end{figure}

\subsection{Effectiveness of Regularization}
\label{sec:main_exp}

In this section, we assess the effectiveness of regularization primarily by comparing the performance of unregularized and regularized self-play methods. We first examine the over-optimization issue inherent in practical self-play alignment by extending the execution of SPPO to Iteration 5. As depicted in Figure \ref{fig:sppo_overopt} (left), a decline in performance appears during the later iterations of SPPO. We hypothesize that this behavior arises from the practical challenges associated with over-optimization.

We then present comprehensive results across three widely used benchmarks (Table \ref{tab:performance}). RSPO with forward and reverse KL regularization, consistently outperforms the unregularized baseline (SPPO)\footnote{We report our replicated testing of the published SPPO Iter3 model (\hyperlink{https://huggingface.co/UCLA-AGI/Mistral7B-PairRM-SPPO-Iter3}{link}) on Arena-Hard benchmark. Thus, it is different from the result presented in the original paper of SPPO \citep{wu2024self}.}, and iterative DPO in iteration 3, and the strong offline method SimPO across all benchmarks, with a clear performance margin. These results underscore the importance of incorporating regularization into self-play alignment. We hypothesize that the effectiveness of regularization arises from the continued utility of the reference policy during optimization, which provides stable guidance and helps mitigate inaccuracies or misspecifications in the general preference model (PairRM).\looseness=-1 

\begin{table}[t!]
    \centering
    
    \begin{tabular}{c|c|c|c} 
\toprule
\begin{tabular}[c]{@{}c@{}}\textbf{Methods}\\\small{(Base Model: Mistral-7B-Instruct)}\end{tabular}                                                           & \begin{tabular}[c]{@{}c@{}}\textbf{AlpacaEval-2 }\\\textbf{LCWR ($\%$)}\end{tabular} & \begin{tabular}[c]{@{}c@{}}\textbf{Arena-Hard}\\\textbf{Auto-v0.1}\end{tabular} & \textbf{MT-Bench}  \\ 
\midrule
Mistral-7B-Instruct \citep{jiang2023mistral}                                                     & 17.1                                                                          & 12.6                                                                            & 7.51               \\
Snorkel (Iterative DPO) \citep{viethoangtranduong}                                         & 26.4                                                                          & 20.7                                                                            & 7.58               \\
SPPO Iter3 \citep{wu2024self}                                                    & 28.5                                                                          & 19.2                                                                            & 7.59               \\
SimPO \citep{meng2024simpo}                                                         & 32.1                                                                          & 21.0                                                                            & 7.60               \\ 
\midrule
\rowcolor[gray]{.9} RSPO (IS-For.+Rev.) Iter3 & \textbf{35.4}                             & \textbf{22.9}                               &  \textbf{7.75}      \\
\bottomrule
\end{tabular}
\vspace{.3cm}
    \caption{\textbf{Comprehensive comparisons on three popular benchmarks} of baselines, and our strongest model. \marktext{2em}{RSPO} with Importance-Sampling-based Forward KL ($\lambda_1=0.1$) + Reverse KL ($\lambda_2=0.5$) divergence as regularization \marktext{24em}
    {outperforms baselines on all benchmarks with a clear margin.}}
    \label{tab:performance}
    \vspace{-.8cm}
\end{table}

\begin{wrapfigure}{r}{0.52\linewidth}
  \centering
  \vspace{-1em}
  \resizebox{.52\textwidth}{!}{
    \begin{tabular}{cccc} 
    \toprule
    \multirow{2}{*}{Model} & \multicolumn{3}{c}{AlpacaEval 2.0}          \\
                           & LC Win Rate    & Win Rate       & Avg. Len  \\ 
    \midrule
    Mistral-7B             & 17.11          & 14.72          & 1676      \\
    Snorkel                & 26.39          & 30.22          & 2736      \\ 
    SimPO                &  32.1           & 34.8          & 2193      \\ 
    \midrule
    DPO Iter1              & 23.81          & 20.44          & 1723      \\
    DPO Iter2              & 24.23          & 24.46          & 2028      \\
    DPO Iter3              & 22.30          & 23.39          & 2189      \\ 
    \midrule
    SPPO Iter1             & 24.79          & 23.51          & 1855      \\
    SPPO Iter2             & 26.89          & 27.62          & 2019      \\
    SPPO Iter3             & 28.53          & 31.02          & 2163      \\ 
    \rowcolor[gray]{.9} SPPO$^{(3)}\leq$ 9  & 29.17 &	29.75 & 2051 \\
    \midrule \midrule
    \rowcolor[gray]{.9} RSPO Iter1        & 23.16          & 21.06          & 1763      \\
    \rowcolor[gray]{.9} RSPO Iter2        & 27.91          & 27.38          & 1992      \\
    \rowcolor[gray]{.9} RSPO Iter3        & \textbf{35.44} & \textbf{38.31} & 2286      \\
    \bottomrule
    \end{tabular}}
    \captionof{table}{\textbf{AlpacaEval LCWR of iterative methods.} \marktext{18em}{RSPO shows fast improvement over iterations.}}
    \label{table:sppo_overopt}
    \vspace{-1em}
\end{wrapfigure}

In Table \ref{table:sppo_overopt}, we further contrast the performance dynamics across iterations of methods: unregularized self-play method, (SPPO), other iterative methods, and the best RSPO, namely RSPO (For.+Rev.), regularized by the linear combination of Forward KL and Reverse KL divergence with temperatures of $0.1$ and $0.5$, respectively. The comparative results reveal that regularization enhances the SPPO win rate from $31.02\%$ to $38.31\%$, and the LC win rate increases from $28.53\%$ to $35.44\%$ in iteration $3$. Notably, in the first iteration, reg. SPPO exhibits a slightly lower LC win rate, potentially attributable to the influence of strong regularization. However, subsequent iterations show a marked improvement, with the LC win rate of reg. SPPO increases by up to $7.53\%$ within a single iteration. In summary, Table \ref{table:sppo_overopt} underscores the effectiveness of regularization in self-play optimization.

Finally, to rule out the possibility of insufficient iterations affecting performance, we report the best result among nine iterations of our replicated SPPO in Table \ref{table:sppo_overopt}, denoted as "SPPO$^{(3)}\leq9$", where $(3)$ represents that the strongest model is SPPO-Iter3. SPPO$^{(3)}\leq9$ consistently underperforms the RSPO result at iteration $3$. This observation emphasizes that even extended training under the unregularized framework fails to match the performance gains achieved through regularization, thereby affirming again the critical role of regularization, and the policy update guidance provided by reference $\mu$ in self-play methodologies for preference optimization.


\subsection{Impact of Different Regularizations}
\label{sec:main_reg}

We then study the effect of applying different regularization $R$ in RSPO. To obtain a well-regularized self-play, the tuning of regularization temperature $\lambda$ is necessary. An ablation study of the regularization temperature of different methods is shown in Figure \ref{fig:rspo_length_para}. According to the figure, the response length increases along with the temperature in reverse KL divergence and Chi-square divergence regularized RSPO. However, both the length and win rate are decreased with stronger regularization via Forward KL divergence, implemented using importance sampling. We attribute the decreasing win rate to the violation of relative convexity assumption (\ref{assumption:reg}), and the length reduction to the intrinsic mass-averaging property of forward-KL divergence divergence when used for regularization.

In particular, the raw win rate analysis highlights reverse KL divergence as a crucial factor in enhancing self-play performance. We attribute the observed effect to the inherent mode-seeking behavior of reverse KL divergence. Given that forward KL divergence tends to reduce response length while reverse KL divergence yields significant improvements, we adopt a linear combination of both. This approach is designed to balance their complementary effects, ultimately optimizing for a higher LCWR (RSPO (IS-For. + Rev.) in Figure \ref{fig:sppo_overopt} Right). The hyperparameters provided in Table \ref{tab:divergences}.\looseness=-1


\begin{figure}[t!]
    \centering   \includegraphics[width=.49\linewidth]{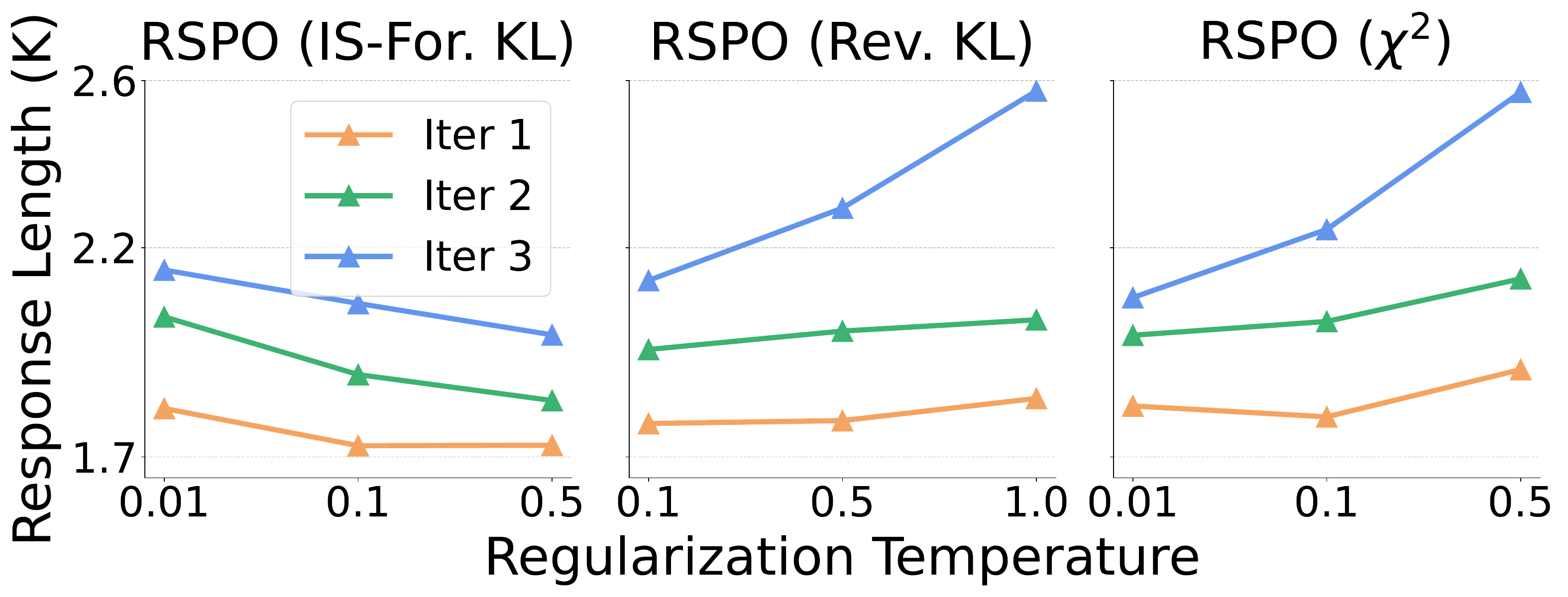}
    \includegraphics[width=.49\linewidth]{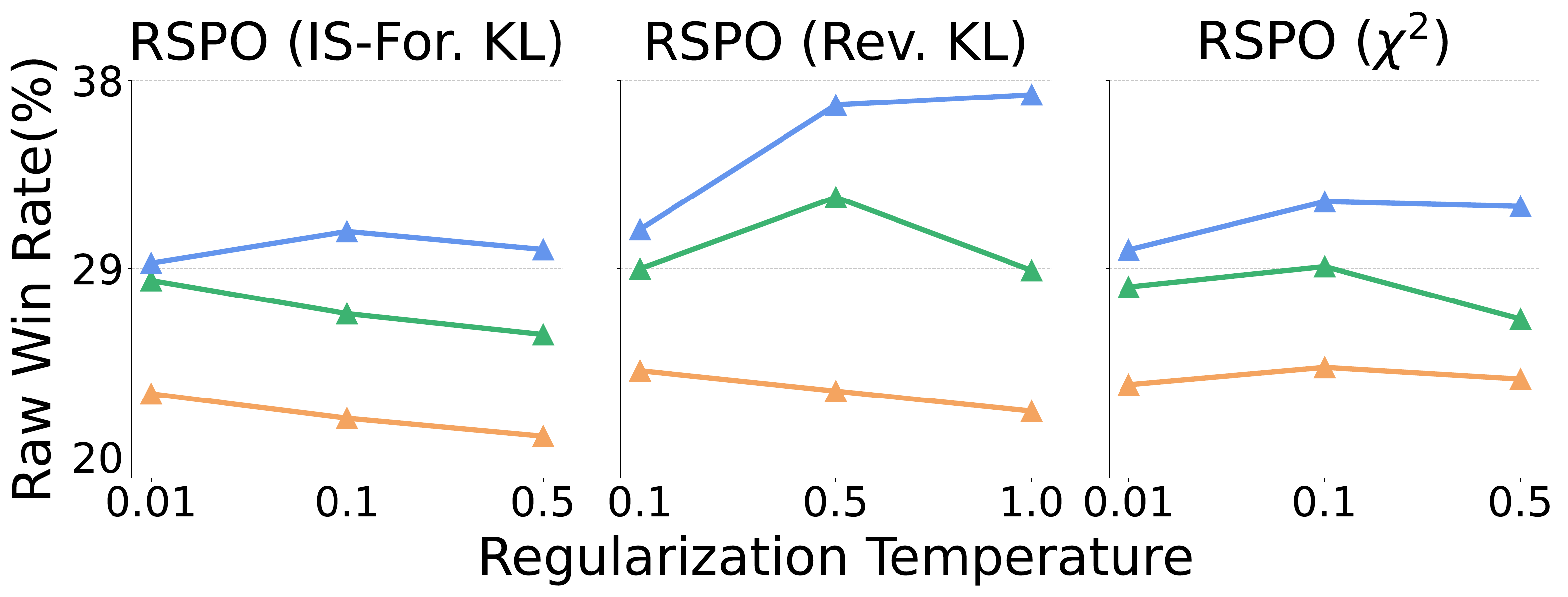}
    \caption{\textbf{Ablation study on regularization temperature} $\lambda$ of RSPO conducted on AlpacaEval 2.0 prompts. We evaluate how the average response length and raw WR are affected by the regularization temperature. \marktext{27em}{Higher temperature of forward KL leads to shorter response length.}}
    \label{fig:rspo_length_para}
    \vspace{-.2cm}
\end{figure}




\subsection{Response Diversity and Other Aspects}
\label{sec:diversity}

\begin{wrapfigure}{r}{0.52\linewidth}
\centering
\vspace{-1em}
\resizebox{.52\textwidth}{!}{
\begin{tabular}{cccc} 
\toprule
\multirow{2}{*}{Regularization}                                                   & \multirow{2}{*}{Iteration} & \multicolumn{2}{c}{AlpacaEval 2.0 Dataset}        \\
&                            & LCWR $\uparrow$ & Self-BLEU $\downarrow$  \\ 
\midrule
\multirow{3}{*}{$\times$}                                                               & 1                          & 24.79           & 0.751                   \\
& 2                          & 26.89           & 0.754                   \\
& 3                          & 28.53           & 0.758                   \\ 
\midrule \midrule
\multirow{3}{*}{\begin{tabular}[c]{@{}c@{}}IS-Forward KL\\+ Reverse KL\end{tabular}} & 1                          & 23.16           & 0.747                   \\
& 2                          & 27.91           & 0.743                   \\
& 3                          & \cellcolor{gray!30} \textbf{35.44}  & \cellcolor{gray!30} 0.714                   \\ 
\midrule
\multirow{3}{*}{Reverse KL~}                                                      & 1                          & 25.52           & 0.747                   \\
& 2                          & 32.26           & 0.730                   \\
& 3                          & \cellcolor{gray!30} 34.21           & \cellcolor{gray!30} \textbf{0.691}          \\ 
\midrule
\multirow{3}{*}{IS-Forward KL}                                                       & 1                          & 24.88           & 0.756                   \\
& 2                          & 27.9            & 0.759                   \\
& 3                          & 30.09           & 0.760                   \\
\midrule
\multirow{3}{*}{$\chi^2$}                                                       & 1                          & 26.7           & 0.745                   \\
& 2                          & 28.78            & 0.740                   \\
& 3                          & 29.97           & 0.739                   \\
\bottomrule
\end{tabular}}
\vspace{.3cm}
\captionof{table}{\textbf{Response diversity} of SPPO and RSPO evaluated with Self-BLEU score. The regularization temperatures of RSPO are the same as in Figure \ref{fig:sppo_overopt}. \marktext{20.2em}{RSPO methods involving reverse KL regularization} \marktext{19em}{have both better generation quality and diversity.}}
\label{table:iter_exp}
\end{wrapfigure}

We demonstrate additional advantages introduced by regularization. We first investigate the diversity of the response cause by regularization.
We first provide a motivating example with synthetic data in Appendix \ref{append:2d_diversity}, which shows that the unregularized self-play may converge to a collapsed response when multiple equally good responses exist. On the contrary, RSPO with maximum entropy regularization has multi-modal distribution for generation.

For LLMs, we investigate the diversity of generations by estimating the variability of the responses. We use the Self-BLEU \citep{zhu2018texygen} score, where a lower score implies higher response diversity. We take the first $200$ tokens of each of the $16$ generated responses using the prompts of AlpacaEval.  \looseness=-1

The trend of Self-BLEU scores presented in \Cref{table:iter_exp} (Right) show that applying RSPO with Reverse KL increases response diversity the most, as well as the LCWRs of AlpacaEval 2.0. Although reverse KL regularization is typically associated with reduced diversity, it can, counterintuitively, enhance diversity when the high-probability region of the reference policy $\mu$ contains multiple modes—a scenario commonly arising when $\mu$ is pretrained on a diverse dataset. In such cases, the sampling-regularized optimization process with reverse KL can also induce additional modes in the learned policy distribution, thereby promoting greater diversity in responses. In contrast, IS-Forward KL yields slightly lower diversity, as its importance sampling–based implementation necessitates hard clipping for numerical stability. Compared to reverse KL, the $\chi^2$ divergence functions as a stronger regularizer \citep{huang2024correcting}, promoting diversity, albeit at a slower rate.

Finally, we assess additional aspects of response quality on the Ultrafeedback validation set using ArmoRM \cite{wang2024interpretable} (Table \ref{table:armorm}). Reverse KL regularization improves both truthfulness and helpfulness. Notably, while forward and reverse KL regularization individually tend to diminish instruction-following performance, their combination yields improvements across nearly all evaluated aspects, attaining the highest overall score—with particularly strong gains in Instruction Following and Truthfulness.

\begin{table}[t!]
\centering
\resizebox{\textwidth}{!}{\begin{tabular}{@{}lccccc@{}}
\toprule
\textbf{Methods} & \textbf{Overall Score} & \textbf{Instruction Following} & \textbf{Truthfulness} & \textbf{Honesty} & \textbf{Helpfulness} \\
\midrule
Snorkel & 0.706 & 0.781 & 0.796 & 0.821 & 0.760 \\
SPPO & 0.716 & 0.798 & 0.812 & \cellcolor{gray!30} \textbf{0.836} & 0.771 \\
\midrule
RSPO ($\chi^2$, $\lambda = 0.1$) & 0.713 & 0.793 & 0.805 & 0.827 & 0.769 \\
RSPO (Rev. $\lambda = 0.5$) & 0.718 & 0.798 & 0.805 & 0.831 & \cellcolor{gray!30} \textbf{0.773} \\
RSPO (Rev. $\lambda = 1$) & 0.715 & 0.798 & 0.807 & 0.826 & 0.769 \\
RSPO (For. $\lambda = 0.1$) & 0.711 & 0.795 & 0.809 & 0.824 & 0.760 \\
RSPO (For. $\lambda = 0.5$) & 0.713 & 0.793 & 0.815 & 0.826 & 0.749 \\
\textbf{RSPO (For.+Rev.)} & \cellcolor{gray!30} \textbf{0.719} & \cellcolor{gray!30} \textbf{0.805} & \cellcolor{gray!30}\textbf{0.816} & 0.833 & 0.768 \\
\bottomrule
\end{tabular}}
\vspace{.3cm}
\caption{\textbf{ArmoRM Evaluation.} Evaluation of diverse response quality aspects using ArmoRM on Ultrafeedback validation set. The \marktext{3.2em}{combined} application of forward and reverse KL regularization leads to  \marktext{34.5em}{superior performance compared to either form of regularization applied independently.}}
\label{table:armorm}
\vspace{-.2cm}
\end{table}

\vspace{-.3em}
\section{Related Work}
\textbf{Offline RLHF with general divergence for regularization.} The use of general divergence-based regularization has been explored in the context of offline alignment. $f$-DPO \citep{wang2023beyond} extends Direct Preference Optimization \citep{rafailov2024direct} from reverse KL regularization to a broader class of $f$-divergences, but primarily demonstrates benefits in generation diversity. The specific effects of individual divergences—and their performance on widely-used benchmarks such as AlpacaEval—remain unexamined. $\chi$PO \citep{huang2024correcting} emphasizes the theoretical importance of $\chi^2$ divergence for uncertainty quantification. However, the role of regularization in online iterative preference optimization, particularly its empirical impact on standard benchmarks, has yet to be studied. \looseness=-1

\textbf{Contrastive Self-Play Alignment} We emphasize the distinction between our self-play approach and \textit{contrastive} self-play methods including Direct Nash Optimization (DNO) \citep{rosset2024direct} and Iterative Nash Policy Optimization (INPO) \citep{zhang2024iterativenashpolicyoptimization}. These methods conduct policy optimization with a loss objective necessary but not sufficient for Mirror Descent (MD) update \citep{beck2003mirror}. This objective is constructed via winner-loser response comparisons similar to Direct Preference Optimization (DPO) and Identity Preference Optimization (IPO) \citep{azar2024general}. Optimizing such contrastive loss can lead to only an increase in the relative likelihood gap without necessarily enhancing the absolute probability of the preferred response \citep{pal2024smaug}. In contrast, our method directly approximates the MD update by converting it to an equivalent reinforcement learning problem, thereby circumventing the limitations of contrastive approaches. \looseness=-1

\section{Conclusion}
\vspace{-.5em}
In this paper, we study the regularization in self-play by proposing a framework, namely Regularized Self-Play Policy Optimization (RSPO). Based on RSPO, we can apply different regularization functions for policy updates by adding the regularization term to the loss functions, which is still guaranteed to converge to the NE of the regularized preference optimization game. In the empirical assessments, we achieve significant improvement over the base model and unregularized self-play method, SPPO. We also empirically demonstrate that regularization promotes response diversity, as well as instruction following, truthfulness, and helpfulness. These findings underscore the critical role of regularization as a fundamental component in optimizing self-play alignment.


\bibliographystyle{unsrtnat}
\bibliography{preprint/ref}  

\clearpage
\appendix

\section{Proofs}
In this section, we provide detailed assumptions, derivations and proofs of propositions.

\begin{assumption}[\textbf{Relative Convexity of $R$ w.r.t. entropy function}]
We assume the regularization function $R$ of policy $\pi$ is a $1$-strongly convex relative to entropy function. In other words, $\forall \pi, \pi' \in \Delta^{\mathcal{X}}_{\mathcal{Y}}$, and $\psi(\pi)=\langle \pi, \log \pi\rangle $, we have
\begin{align}
\langle \partial_\pi R(\pi)- \partial_\pi R(\pi') , \pi - \pi' \rangle \geq \langle {\partial_\pi \psi(\pi) - \partial_\pi \psi(\pi')}, \pi - \pi' \rangle.
\end{align}
\label{assumption:reg}
\vspace{-2em}
\end{assumption}
Assumption \ref{assumption:reg} constrains the class of regularization terms $R$ under which theoretical convergence guarantees can be established. Nonetheless, a broad family of divergences still satisfies this assumption, allowing RSPO to retain convergence properties in a wide range of settings. Among the divergences used in our experiments—including linear combinations—only the forward KL divergence violates this assumption. Interestingly, however, forward KL regularization is empirically observed to reduce response length. To leverage this desirable property while preserving theoretical validity, we propose a linear combination of forward and reverse KL divergences, enabling effective length-controlled generation without sacrificing convergence guarantees, and obtains the best generation quality empirically.

\subsection{Proof of the Existence of regularized Nash Equilibrium}
\label{append:rpm_exists}
\begin{proposition}
Nash Equilibrium in the regularized game in \Cref{eq:rpm} exists, and it is unique.
\end{proposition}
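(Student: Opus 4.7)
The plan is to treat the regularized game as a continuous concave-convex zero-sum game on the product simplex and exploit the symmetry plus strict convexity of $R(\cdot,\mu)$. Write the regularized utility of the max-player as
\begin{align*}
\tilde{u}(\pi,\pi') \;\overset{\text{def}}{=}\; \mathbb{P}(\pi\succ\pi') - \tau R(\pi,\mu) + \tau R(\pi',\mu).
\end{align*}
The strategy set $\Delta^{\mathcal{X}}_{\mathcal{Y}}$ is a (product of) probability simplex, hence nonempty, convex, and compact. Since $\mathbb{P}(\pi\succ\pi')$ is bilinear in $(\pi,\pi')$ and $R(\cdot,\mu)$ is convex (by hypothesis), $\tilde{u}(\cdot,\pi')$ is concave and $\tilde{u}(\pi,\cdot)$ is convex, and both are continuous. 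Thus the conditions of Sion's minimax theorem are met and the game admits a saddle point $(\bar\pi,\bar\sigma)$, giving existence.

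For the symmetric-equilibrium form $(\pi^*,\pi^*)$ required by \Cref{eq:rpm}, I would use the antisymmetry $\mathbb{P}(y\succ y') + \mathbb{P}(y'\succ y)=1$ of the preference model, together with the fact that the regularization enters symmetrically in $\pi$ and $\pi'$. A short computation then shows that if $(\bar\pi,\bar\sigma)$ is a saddle point, so are $(\bar\sigma,\bar\pi)$ and, by convexity of the saddle-point set, the diagonal point $(\tfrac{1}{2}(\bar\pi+\bar\sigma),\tfrac{1}{2}(\bar\pi+\bar\sigma))$, yielding a symmetric equilibrium.

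For uniqueness, I would invoke Assumption~\ref{assumption:reg}: $R(\cdot,\mu)$ is $1$-strongly convex relative to the (negative) entropy, which is itself strictly convex on the simplex. Hence $R(\cdot,\mu)$ is strictly convex, so $\tilde{u}(\cdot,\pi')$ is strictly concave for every $\pi'$ and $\tilde{u}(\pi,\cdot)$ strictly convex for every $\pi$. The standard saddle-point argument then gives uniqueness: if $(\pi_1^*,\pi_1^*)$ and $(\pi_2^*,\pi_2^*)$ were two NE, the chain
\begin{align*}
v = \tilde{u}(\pi_1^*,\pi_1^*) \le \tilde{u}(\pi_1^*,\pi_2^*) \le \tilde{u}(\pi_2^*,\pi_2^*) = v
\end{align*}
shows $\pi_1^*$ is also a maximizer of $\tilde{u}(\cdot,\pi_2^*)$; strict concavity forces $\pi_1^*=\pi_2^*$.

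The main obstacle I anticipate is handling the boundary of the simplex: strict convexity of the entropy (and thus of $R$) is clean on the relative interior, but vanishes on the boundary where some coordinates are zero. I would resolve this either by appealing to the fact that $R$ being $1$-strongly convex relative to entropy implies the equilibrium lies in the relative interior (the regularizer's subgradient blows up at the boundary, ruling out boundary optima), or by arguing uniqueness directly from the first-order KKT conditions, which become a strictly monotone variational inequality under Assumption~\ref{assumption:reg}. A secondary concern is that $\mathcal{Y}$ is notated as possibly countable; if so, one works with finite support approximations or appeals to Glicksberg's generalization of Sion's theorem, but the structural arguments above are unchanged.
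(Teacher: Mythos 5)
Your proposal is correct and follows essentially the same route as the paper: the regularized utility is concave in $\pi$ and convex in $\pi'$ by linearity of the preference and convexity of $R(\cdot,\mu)$, and Sion's minimax theorem yields the saddle point. Your write-up is in fact more complete than the paper's two-sentence argument, which claims uniqueness ``directly from the minimax theorem'' without the strict-convexity step (via Assumption~\ref{assumption:reg}) and the symmetrization argument that you supply explicitly.
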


\begin{proof}We prove the existence of in this section, largely following the idea of proving the existence of KL regularized Nash Equilibrium by \citet{munos2023nash}.

Since the utility $u(\pi, \pi')$ is linear in $\pi$ and $\pi'$, and the regularization function is assumed to be convex (Assumption \ref{assumption:reg}), the regularized preference is concave in $\pi$ and convex in $\pi'$. Therefore, the existence and the uniqueness of a regularized Nash Equilibrium in \Cref{eq:rpm} can be directly derived from the minimax theorem \citep{sion1958general}.
\end{proof}

\subsection{Proof of Equivalence between MD and RSPO}
\label{append:nashmd_proof}

\begin{proposition}
Nash-MD and Online Mirror Descent \citep[Section~6]{munos2023nash} can be seen as instances of Regularized Self-Play Policy Optimization (RSPO) (\Cref{eq:RSPO}).
\end{proposition}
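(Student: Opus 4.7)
The plan is to reduce the Nash-MD update (Equation \eqref{eq:nash-md}) to a GMMD update in two different ways by algebraically rewriting the forward KL term $D_{\text{KL}}(\pi\|\pi_t^\mu)$, and then invoke the equivalence between $\mathcal{L}_{\text{GMMD}}$ and $\mathcal{L}_{\text{RSPO}}$ established after \Cref{eq:pg} to translate the two GMMD updates into the two RSPO gradients claimed in \Cref{eq:rspo_nashmd_gen1} and \Cref{eq:rspo_nashmd_gen2}.

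First, I would establish the key algebraic identity. Taking the log of the geometric-mixture definition $\pi_t^\mu(y) \propto \pi_t(y)^{1-\eta\tau}\mu(y)^{\eta\tau}$ gives
\begin{align}
\log \pi_t^\mu(y) = (1-\eta\tau)\log\pi_t(y) + \eta\tau\log\mu(y) - \log Z,
\end{align}
which, after plugging into the definition of KL, yields
\begin{align}
D_{\text{KL}}(\pi\|\pi_t^\mu) = D_{\text{KL}}(\pi\|\pi_t) + \eta\tau \,\mathbb{E}_{\pi}\!\left[\log \tfrac{\pi_t}{\mu}\right] + \log Z. \label{eq:KL-decomp}
\end{align}
Since $\log Z$ does not depend on $\pi$, it is irrelevant to the $\arg\min$.

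To obtain \Cref{eq:rspo_nashmd_gen2} (the $R=0$ form), I substitute \eqref{eq:KL-decomp} into \Cref{eq:nash-md} and absorb the $\mathbb{E}_\pi[\log(\pi_t/\mu)]$ term into the utility gradient, obtaining
\begin{align}
\pi_{t+1} = \arg\min_\pi \,-\eta\,\mathbb{E}_\pi\!\left[\mathbb{P}(y\succ\pi_t^\mu) - \tau\log\tfrac{\pi_t(y)}{\mu(y)}\right] + D_{\text{KL}}(\pi\|\pi_t),
\end{align}
which is exactly the GMMD update \eqref{eq:gmmd} with $G(y;\pi_t) = \mathbb{P}(y\succ\pi_t^\mu) - \tau\log(\pi_t/\mu)$ and no external magnet ($R=0$). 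Applying the policy-gradient equivalence of \Cref{eq:pg}, the $\theta$-gradient of this GMMD loss matches $\nabla_\theta\mathcal{L}_{\text{RSPO}}$ with the stated choice of $G$, $B=\tfrac12$, $R=0$ up to the multiplicative constant inherent to the mean-square reformulation.

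To obtain \Cref{eq:rspo_nashmd_gen1} (the $R=D_{\text{KL}}(\pi_\theta\|\mu)$ form), I further rewrite $\mathbb{E}_\pi[\log(\pi_t/\mu)] = D_{\text{KL}}(\pi\|\mu) - D_{\text{KL}}(\pi\|\pi_t)$, so that \eqref{eq:KL-decomp} becomes
\begin{align}
D_{\text{KL}}(\pi\|\pi_t^\mu) = (1-\eta\tau)\,D_{\text{KL}}(\pi\|\pi_t) + \eta\tau\, D_{\text{KL}}(\pi\|\mu) + \log Z.
\end{align}
Substituting into \Cref{eq:nash-md} and rescaling the objective by $(1-\eta\tau)^{-1}$ (which preserves the $\arg\min$) gives a GMMD update with $G = \mathbb{P}(y\succ\pi_t^\mu)$, Bregman term $D_{\text{KL}}(\pi\|\pi_t)$, and magnet $\tilde\tau\, D_{\text{KL}}(\pi\|\mu)$ where $\tilde\tau = \eta\tau/(1-\eta\tau)$. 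Invoking \Cref{eq:pg} again converts this GMMD gradient into $\nabla_\theta\mathcal{L}_{\text{RSPO}}$ with $G=\mathbb{P}(y\succ\pi_t^\mu)$, $B=\tfrac12$, $R=D_{\text{KL}}(\pi_\theta\|\mu)$.

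The routine parts are the two KL decompositions. The main obstacle is the last step of the second reduction: the rescaling by $(1-\eta\tau)^{-1}$ means the equivalence between Nash-MD and the $R=D_{\text{KL}}(\pi_\theta\|\mu)$ form of RSPO is an equivalence of update \emph{directions} rather than of loss values, holding under an appropriate identification of the effective learning rate and the coefficient $\lambda$ in front of $R$. I would make this explicit by stating that $\nabla_\theta\mathcal{L}_{\text{Nash-MD}}$ is parallel to $\nabla_\theta\mathcal{L}_{\text{RSPO}}$ (via the multiplicative-constant clause already used throughout Section~\ref{sec:theory}), and would verify that the closed-form minimizers of both GMMD problems coincide with the tilted distribution $\pi_{t+1}(y) \propto \pi_t(y)^{1-\eta\tau}\mu(y)^{\eta\tau}\exp(\eta\mathbb{P}(y\succ\pi_t^\mu))$ as a sanity check that no hidden constants have been dropped.
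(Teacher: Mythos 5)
Your Nash-MD reduction is correct but proceeds along a genuinely different route from the paper. You start from the idealized mirror-descent update in \Cref{eq:nash-md}, decompose $D_{\text{KL}}(\pi\|\pi_t^\mu)$ via the geometric-mixture identity $\log\pi_t^\mu = (1-\eta\tau)\log\pi_t + \eta\tau\log\mu - \log Z$, and then pass through the GMMD$\to$RSPO bridge of \Cref{eq:pg}. The paper instead works directly with the \emph{practical} Nash-MD policy gradient from Section~7 of \citet{munos2023nash} (where the penalty $-\tau\log(\pi_\theta/\mu)$ involves the current parameterized policy $\pi_\theta$, not $\pi_t$), and manipulates it by adding zero-mean baseline terms such as $2\tau\log(\pi_t(y)/\mu(y))$ and recognizing the result as the gradient of a squared loss plus $\tau\nabla_\theta D_{\text{KL}}(\pi_\theta\|\mu)$. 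Your route buys conceptual clarity --- both target forms \Cref{eq:rspo_nashmd_gen1} and \Cref{eq:rspo_nashmd_gen2} fall out of one KL decomposition, and the closed-form-minimizer sanity check is a nice touch --- at the cost of an extra hop through the tabular update rule, plus the $(1-\eta\tau)^{-1}$ rescaling, which (as you correctly flag) makes the second correspondence hold only up to a reparametrization of the effective learning rate and the coefficient on $R$; the paper's direct computation pins down the exact constants ($2\tau^2$ on the squared term, $\tau$ on the KL term) without any rescaling.

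There is, however, one concrete gap: the proposition also asserts that \emph{Online Mirror Descent} is an instance of RSPO, and your proposal does not address OMD at all. The paper handles it separately by showing that the OMD objective $\arg\max_{\pi}\eta\,\mathbb{E}_{y\sim\pi}[\mathbb{P}(y\succ\pi_t)-\tau\log(\pi_t(y)/\mu(y))]-D_{\text{KL}}(\pi\|\pi_t)$ is already in GMMD form with $G=\mathbb{P}(y\succ\pi_t)-\tau\log(\pi_t/\mu)$ and $R=0$, so its policy gradient equals (up to a constant) $\nabla_\theta\mathcal{L}_{\text{RSPO}}$ with that $G$. This half is in fact easier than the Nash-MD half --- no mixture decomposition is needed --- but it must be stated to prove the proposition as written. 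You should also make explicit the step you currently leave implicit, namely that the practical Nash-MD loss whose gradient appears in \Cref{eq:rspo_nashmd_gen1} is the RL implementation of the tabular update you manipulate; otherwise you have only shown that the \emph{ideal} updates coincide.
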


\begin{proof}
In this section, we first provide derivations of how Nash-MD is equivalent to RSPO: 
\begin{align}
\nabla_\theta \mathcal{L}_{\text{Nash-MD}} = \nabla_\theta \mathcal{L}_{\text{RSPO}}\big(\theta;G=\mathbb{P}(y\succ  \pi_t^\mu), B=\frac{1}{2}, R=D_{\text{KL}}(\pi_\theta\|\mu) \big)
\end{align}
On one hand, Nash-MD practical loss \citep[Section~7]{munos2023nash} is defined as
\begin{align}
&\nabla_{\theta} \mathcal{L}_{\text{Nash-MD}}(\theta) \\ 
=&\mathbb{E}_{\substack{y\sim \pi_{\theta},  \\ 
y'\sim \pi^{\mu}_t}}\Big[ \nabla_\theta \log \pi_\theta(y) \Big( \mathbb{P}(y \succ \pi_t^\mu) - \frac{1}{2} - \tau \log \frac{\pi_\theta(y)}{\mu(y)} \Big) \Big] \label{eq:nash_origin_0} \\ 
=&\mathbb{E}_{\substack{y\sim \pi_{\theta},  \\ 
y'\sim \pi^{\mu}_t}}\Big[ \nabla_\theta \log \pi_\theta(y) \Big( \mathbb{P}(y \succ \pi_t^\mu) - \frac{1}{2} - \tau \log \frac{\pi_\theta(y)}{\mu(y)} + 2\tau \log \frac{\pi_t(y)}{\mu(y)} \Big) \Big] \label{eq:nash_origin_1} \\
=&\mathbb{E}_{\substack{y\sim \pi_{\theta},  \\ 
y'\sim \pi^{\mu}_t}}\Big[ \nabla_\theta \log \pi_\theta(y) \Big( \mathbb{P}(y \succ \pi_t^\mu) - \frac{1}{2} - 2\tau \log \frac{\pi_\theta(y)}{\pi_t(y)} + \tau \log \frac{\pi_\theta(y)}{\mu(y)} \Big) \Big] \\
=&\mathbb{E}_{\substack{y\sim \pi_{\theta},  \\ 
y'\sim \pi^{\mu}_t}}\Big[ \nabla_\theta \log \pi_\theta(y) \Big( \mathbb{P}(y \succ \pi_t^\mu) - \frac{1}{2} - 2\tau \log \frac{\pi_\theta(y)}{\pi_t(y)} \Big) \Big] + \tau \nabla_\theta \mathbb{E}_{\substack{y\sim \pi_{\theta},  \\ 
y'\sim \pi^{\mu}_t}}\Big[ \log \frac{\pi_\theta(y)}{\mu(y)} \Big]  \\
=&\mathbb{E}_{\substack{y\sim \pi_{\theta},  \\ 
y'\sim \pi^{\mu}_t}}\Big[ \nabla_\theta \log \pi_\theta(y) \Big( \mathbb{P}(y \succ \pi_t^\mu) - \frac{1}{2} - 2\tau \log \frac{\pi_\theta(y)}{\pi_t(y)} \Big) \Big] + \tau \nabla_\theta D_{\text{KL}}(\pi_\theta \| \mu) \label{eq:nash_origin_2} \\
=& 2\tau^2  \mathbb{E}_{\substack{y\sim \pi_{\theta},  \\ 
y'\sim \pi^{\mu}_t}}\Big[ \nabla_\theta \Big( \log \frac{\pi_\theta(y)}{\pi_t(y)} - \frac{1}{2\tau} \Big( \mathbb{P}(y \succ \pi_t^\mu) - \frac{1}{2} \Big) \Big)^2 \Big] + \tau \nabla_\theta D_{\text{KL}}(\pi_\theta \| \mu) \label{eq:nash_origin_3} \\
=& 2\tau^2 \nabla_\theta \mathbb{E}_{\substack{y\sim \pi_{t},  \\ 
y'\sim \pi^{\mu}_t}}\Big[ \log \frac{\pi_\theta(y)}{\pi_t(y)} - \frac{1}{2\tau} \Big( \mathbb{P}(y \succ \pi_t^\mu) - \frac{1}{2} \Big) \Big]^2 + \tau \nabla_\theta D_{\text{KL}}(\pi_\theta \| \mu) \label{eq:nash_origin_4}.
\end{align}

\Cref{eq:nash_origin_0} is the definition of Nash-MD policy gradient. \Cref{eq:nash_origin_1} holds because the additional term satisfies that $\mathbb{E}_{\substack{y\sim \pi_{\theta},  \\ 
y'\sim \pi^{\mu}_t}}\big[ \nabla_\theta \log \pi_\theta(y) \big( \log \frac{\pi_t(y)}{\mu(y)} \big) \big]=\nabla_\theta \mathbb{E}_{\substack{y\sim \pi_{\theta},  \\ 
y'\sim \pi^{\mu}_t}}\big[  \log \frac{\pi_t(y)}{\mu(y)} \big] = \nabla_\theta  \log \frac{\pi_t(y)}{\mu(y)} =0$. \Cref{eq:nash_origin_2} holds due to the definition of reverse KL divergence. \Cref{eq:nash_origin_3} is derived by computing the integral of $\log \pi_\theta(y) \big( \mathbb{P}(y \succ \pi_t^\mu) - \frac{1}{2} - 2\tau \log \frac{\pi_\theta(y)}{\pi_t(y)} \big)$. 

On the other hand, we show that Nash-MD and OMD can also be generalized by RSPO \textit{without} external regularization, such that we can add additional regularization to existing regularized self-play methods. Nash-MD practical loss \citep[Section~7]{munos2023nash} is defined as
\begin{align}
\nabla_{\theta} \mathcal{L}_{\text{Nash-MD}}(\theta) 
&=\mathbb{E}_{\substack{y\sim \pi_{\theta},  \\ 
y'\sim \pi^{\mu}_t}}\Big[ \nabla_\theta \log \pi_\theta(y) \Big( \mathbb{P}(y \succ y') - \frac{1}{2} - \tau \log \frac{\pi_\theta(y)}{\mu(y)} \Big) \Big] \label{eq:nash1} \\
&=\mathbb{E}_{\substack{y\sim \pi_{\theta},  \\ y'\sim \pi^{\mu}_t}}\Big[ \nabla_\theta \log \pi_\theta(y) \Big( \mathbb{P}(y \succ y') - \frac{1}{2} - \tau \log \frac{\pi_\theta(y)}{\pi_t(y)} - \tau \log \frac{\pi_t(y)}{\mu(y)} \Big) \Big]  \label{eq:nash2} \\
&=\mathbb{E}_{y\sim \pi_{\theta}}\Big[ \nabla_\theta \log \pi_\theta(y) \Big( \mathbb{P}(y \succ \pi^{\mu}_t)  - \frac{1}{2} - \tau \log \frac{\pi_\theta(y)}{\pi_t(y)} - \tau \log \frac{\pi_t(y)}{\mu(y)} \Big) \Big] \label{eq:nash3} \\
&=\mathbb{E}_{y\sim \pi_{t}}\Big[ \nabla_\theta \log \pi_\theta(y) \Big( \mathbb{P}(y \succ \pi^{\mu}_t)  - \frac{1}{2} - \tau \log \frac{\pi_\theta(y)}{\pi_t(y)} - \tau \log \frac{\pi_t(y)}{\mu(y)} \Big) \Big] \label{eq:nash4} \\
& = \nabla_\theta \mathbb{E}_{y\sim \pi_{t}}\Big[ \tau \log \frac{\pi_\theta(y)}{\pi_t(y)} - \Big( \mathbb{P}(y \succ \pi^{\mu}_t) - \tau \log \frac{\pi_t(y)}{\mu(y)} - \frac{1}{2} \Big) \Big]^2 / 2 \label{eq:nash5} \\
& = \tau^2 \nabla_\theta \mathbb{E}_{y\sim \pi_{t}}\Big[ \log \frac{\pi_\theta(y)}{\pi_t(y)} - \frac{1}{\tau} \Big( \mathbb{P}(y \succ \pi^{\mu}_t) - \tau \log \frac{\pi_t(y)}{\mu(y)} - \frac{1}{2} \Big) \Big]^2 / 2. \label{eq:nash6}
\end{align}
\Cref{eq:nash1} is the definition of practical Nash-MD loss \citep[Section~7]{munos2023nash}. \Cref{eq:nash2} holds by adding an subtracting the same element $\log \pi_t(y)$. \Cref{eq:nash3} holds due to $\mathbb{E}_{y'\sim \pi_t^\mu}[\mathbb{P}(y \succ y')]= \mathbb{P}(y \succ \pi_t^\mu)$. The learning rate $\eta$ is originally omitted in the paper \citep{munos2023nash}. Here Nash-MD is generalized by $\mathcal{L}_{\text{RSPO}}$ with $\eta=\tfrac{1}{\tau}$ and $R=0$. 

OMD is to execute
$\arg\max_{\pi}  \eta \mathbb{E}_{y \sim \pi} \left[ \mathbb{P}(y \succ \pi_t) - \tau \log \frac{\pi_t(y)}{\mu(y)} \right] - \text{KL}(\pi, \pi_t)$.
Therefore, the loss function of the OMD update satisfies
\begin{align}
\nabla_{\theta} \mathcal{L}_{\text{OMD}}(\theta) &= -\nabla_\theta  \eta \mathbb{E}_{y \sim \pi_\theta} \left[ \mathbb{P}(y \succ \pi_t) - \tau \log \frac{\pi_t(y)}{\mu(y)} \right] + D_{\text{KL}}(\pi_\theta, \pi_t) \label{eq:omd_gradient_1} \\
&= -\nabla_\theta  \eta \mathbb{E}_{y \sim \pi_\theta} \left[ \mathbb{P}(y \succ \pi_t) - \tau \log \frac{\pi_t(y)}{\mu(y)} - \log \frac{\pi_\theta}{\pi_t} \right] \label{eq:omd_gradient_2} \\
&=  \eta \mathbb{E}_{y \sim \pi_\theta} \left[ -\nabla_\theta \log \pi_\theta \Big( \mathbb{P}(y \succ \pi_t) - \tau \log \frac{\pi_t(y)}{\mu(y)} - \log \frac{\pi_\theta}{\pi_t} \Big) \right] \label{eq:omd_gradient_3} \\
&=  \frac{\eta}{2} \cdot \mathbb{E}_{y \sim \pi_\theta} \left[ \nabla_\theta \Big( \mathbb{P}(y \succ \pi_t) - \tau \log \frac{\pi_t(y)}{\mu(y)} - \log \frac{\pi_\theta(y)}{\pi_t(y)} \Big)^2 \right] \label{eq:omd_gradient_4} \\
&=  \frac{\eta}{2} \cdot \mathbb{E}_{y \sim \pi_t} \left[ \nabla_\theta  \log \frac{\pi_\theta(y)}{\pi_t(y)} - \Big( \mathbb{P}(y \succ \pi_t) - \tau \log \frac{\pi_t(y)}{\mu(y)} \Big) \right]^2. \label{eq:omd_gradient_5}
\end{align}
\Cref{eq:omd_gradient_1} holds because the OMD update is equivalent to descending negative gradient of the feedback $\eta \mathbb{E}_{y \sim \pi} \left[ \mathbb{P}(y \succ \pi_t) - \tau \log \frac{\pi_t(y)}{\mu(y)} \right] - \text{KL}(\pi, \pi_t)$. \Cref{eq:omd_gradient_2} holds due to the definition of $D_{\text{KL}}$. \Cref{eq:omd_gradient_3} holds by conducting differentiation on multiplication. The remaining equations hold due to simple algebra. Therefore, OMD can also be generalized by RSPO with $G=\mathbb{P}(y \succ \pi_t) - \tau \log \frac{\pi_t(y)}{\mu(y)}$ and without external regularization.
\end{proof}

\subsection{Proof of Proposition \ref{prop:RSPO}}
\label{append:rspo_proof}






\begin{customprop}{\ref{prop:RSPO}}
If $R(\cdot, \mu)$ is $1$-strongly convex relative to $\psi$ (Assumption \ref{assumption:reg}), policy updated by GMMD in \Cref{eq:gmmd} has last-iterate convergence to the following Nash Equilibrium of a regularized game:
\begin{align}
\max_{\pi} \min_{\pi'} U(\pi; \pi') - \tau R(\pi, \mu) + \tau R(\pi', \mu).
\end{align}
\end{customprop}

\begin{proof}

According to \Cref{eq:gmmd}, GMMD is equivalent to the Algorithm 3.1 in \citet{sokota2022unified}:
\begin{align}
z_{t+1} = \arg\min_{z \in \mathcal{Z}} \eta \left( \langle F(z_t), z \rangle + \alpha g(z) \right) + B_{\psi}(z; z_t),
\label{eq:mmd}
\end{align}
where in our setting, $z=\pi$ is the LLM policy, $F(z_t)=-\partial_\pi U(\pi; \pi_t)$ is the vector of negative partial derivatives of preference w.r.t. each component of $\pi$, $\alpha =\tau $, $g(z)$ is the regularizer $R(\pi)$, and we set $\psi(z) = z\log z$ to convert the Bregman divergence $B_{\psi}$ to KL divergence. Here $U(\pi; \pi_t)$ is treated as a function of vector form of $\pi$, i.e., $[\pi^0\ \pi^1\ \cdots\ \pi^{|\mathcal{Y}|}]$, thus the gradient is a vector gradient where $\partial_\pi U(\pi; \pi_t) = [\partial U /  \partial \pi^0\ \ \partial U /  \partial \pi^1 \ \ \cdots \ \ \  \partial U / \partial \pi^{|\mathcal{Y}|}]$. 

We then show that in our setting the following assumptions are satisfied. $F$ satisfies that for $\mu > 0$ and any $z, z'$, $\langle F(z) - F(z'), z- z' \rangle =0$ since $U$ is linear in $\pi$, and $F(z) - F(z') = -\partial_\pi U(\pi; \pi_t) + \partial_\pi U(\pi';\pi_t)=0$. Therefore, $F$ is Monotone and $L$-smooth. According to Assumption \ref{assumption:reg}, $g$ is $1$-strongly convex relative to $\psi$, i.e., $g(z) \geq g(z') + \frac{g'(z)}{\psi'(z)}(\psi(z) - \psi(z'))$.

Given the assumptions above, according to the Theorem 3.4. in \citet{sokota2022unified}, the update rule defined in \Cref{eq:mmd} has a last-iterate convergence guarantee to a policy $\pi^*$, which is the solution to the variational inequality problem $\text{VI}(\Delta^{\mathcal{X}}_{\mathcal{Y}}, F+ \alpha \partial g)$, i.e., $\pi^*$ satisfies
\begin{align}
\langle \partial_\pi \Big( -U(\pi; \pi^*) + \tau R(\pi, \mu) \Big) \mid_{\pi=\pi^*}, \pi - \pi^* \rangle &\geq 0, \quad \forall \pi \in \Delta^{\mathcal{X}}_{\mathcal{Y}} \nonumber \\
\Leftrightarrow  \langle \partial_\pi \Big( -U(\pi; \pi^*) + \tau R(\pi, \mu) - \tau R(\pi^*, \mu) \Big) \mid_{\pi=\pi^*}, \pi - \pi^* \rangle &\geq 0, \quad \forall \pi \in \Delta^{\mathcal{X}}_{\mathcal{Y}}.
\label{eq:vi_final}
\end{align}
\Cref{eq:vi_final} indicates that moving from $\pi^*$ towards any direction $\pi - \pi^*$ can not increase the value of the objective preference model $U(\pi; \pi^*) - \tau R(\pi, \mu) + \tau R(\pi^*, \mu)$ at the point of $\pi=\pi^*$, given the opponent is $\pi^*$. Therefore, by symmetry, $\pi^*$ is the Nash Equilibrium of the regularized preference model:
\begin{align}
\max_\pi \min_{\pi'} U(\pi;\pi') - \tau R(\pi, \mu) + \tau R(\pi', \mu).
\end{align}



\end{proof}

\subsection{Proof of Corollary \ref{coro:rspo_converge}}
\label{append:rspo_converge}
\begin{proof}
We prove that RSPO in \Cref{eq:RSPO_gmmd} is equivalent to GMMD up to multiplying a constant to the gradient, leading to a regularized Nash Equilibrium.
We follow SPPO to replace the samples $y \sim \pi_\theta$ with $y \sim \pi_t$ directly since they are equivalent while computing the loss before updating, and rewrite the loss equivalent to GMMD: 
\begin{align}
&\nabla_\theta\mathcal{L}_{\text{GMMD}}(\theta) = \nabla_\theta \Bigg(\frac{1}{2}   {\mathbb{E}_{{y \sim 
\pi_t}}\Big[  -\eta G(y;\pi_t) + \log \frac{\pi_\theta(y)}{ \pi_t(y)} {+ \eta B}\Big]^2}  + \tau R(\pi_\theta, \mu) \Bigg) = \frac{1}{2}\nabla_\theta \mathcal{L}_{\text{RSPO}}(\theta).
 \label{eq:final_square_loss}
\end{align}
Therefore, according to \Cref{eq:final_square_loss}, RSPO is the RL implementation of GMMD, since gradients of losses are equivalent up to multiplying a constant. Then we can derive the convergence gaurantee of RSPO.
\begin{align}
&\nabla_\theta\mathcal{L}_{\text{RSPO}}(\theta; G=\mathbb{P}(y \succ \pi_t), B=\frac{1}{2})  \\
&=  \nabla_\theta \Big( \mathbb{E}_{y \sim \pi_t} \Big[ \log \frac{\pi_{\theta}(y)}{\pi_t(y)} -  \eta \Big(   \mathbb{P}(y \succ \pi_t) - \frac{1}{2} \Big) \Big]^2 + \lambda 
 R(\pi_{\theta}, \mu) \Big) \label{eq:rspo_proof_2} \\
&=  \nabla_\theta \Big(  \langle \pi_t, , \big(  -\eta \partial_\pi \mathbb{P}(\pi \succ \pi_t) + \log \frac{\pi_\theta}{ \pi_t} {+ B}  \big)^2 \rangle + \lambda 
 R(\pi_{\theta}, \mu) \Big)  \label{eq:rspo_proof_3} \\
&=  2 \Big( \nabla_\theta  \mathbb{E}_{{y \sim 
\pi_t}}[\big(  -\eta G(y, \pi_t) + \log \frac{\pi_\theta(y)}{ \pi_t(y)} {+ B}  \big)^2] \cdot \frac{1}{2}   
 + \tau \nabla_\theta R(\pi_\theta, \mu) \label{eq:rspo_proof_5}\Big) \\
&=  2\nabla_\theta \mathcal{L}_{\text{GMMD}}(\theta).\label{eq:rspo_proof_6}
\end{align}
\Cref{eq:rspo_proof_2} holds due to definition. \Cref{eq:rspo_proof_3} holds by treating policy as a vector and rewrite the expectation in vector product form, and $\nabla_\pi \mathbb{P}(\pi \succ \pi_t)\mid_{\pi=\pi_t} \mid_{\pi=\pi_t} = [\mathbb{P}(y^0 \succ \pi_t)\quad \mathbb{P}(y^1 \succ \pi_t) \quad \cdots \quad \mathbb{P}(y^{|\mathcal{Y}|} \succ \pi_t)]^T$, where $y^0, y^1, \cdots, y^{\mathcal{Y}}$ represent all possible values of $y$.
\Cref{eq:rspo_proof_5} holds by rewriting the form of dot product as expectation. \Cref{eq:rspo_proof_6} holds due to the equivalent loss form of GMMD in \Cref{eq:final_square_loss}.

Thus, according to Proposition \ref{prop:RSPO}, updating following Algorithm \ref{alg:selfplay} with the above loss function has last-iterate convergence to the Nash Equilibrium of the regularized preference optimization game in \Cref{eq:rpm} by setting $u(\pi;\pi')=\mathbb{P}(\pi \succ \pi')$.
   
\end{proof}

\subsection{Proof of Proposition \ref{theo:reverse_kl}}
\label{sec:reverse_kl_square}

\begin{proof}
$\pi$ is parametrized by $\theta$, $\nabla_{\theta} D_{\text{KL}}(\pi||\mu)=  \mathbb{E}_{\pi_{\theta}}[ \nabla_{\theta}\log \pi_{\theta}(y) - \log \mu(y)]^2/ 2$. This is because
\begin{align}
\nabla_{\theta} D_{\text{KL}}(\pi||\mu)
&= \nabla_{\theta} \sum_y \pi_{\theta}(y) \cdot (\log \pi_{\theta}(y) - \log \mu(y)) \\ &=\sum_y \nabla_{\theta} \pi_{\theta}(y) \cdot (\log \pi_{{\theta}}(y) - \log \mu(y)) + \sum_y\nabla_{\theta} \pi_{\theta}(y) \nonumber \\
&=\sum_y  \pi_{{\theta}}(y) \frac{\nabla_{\theta} \pi_{\theta}(y)}{ \pi_{{\theta}}(y)} \cdot (\log \pi_{{\theta}}(y) - \log \mu(y)) + \nabla_{\theta} \sum_y\pi_{\theta}(y) \nonumber \\
&=\mathbb{E}_{\pi_{{\theta}}} [(\log \pi_{{\theta}}(y) - \log \mu(y)) \cdot \nabla_{\theta} (\log \pi_{\theta}(y) - \log \mu(y))]  \nonumber \\
&= \mathbb{E}_{\pi_{{\theta}}} [\nabla_{\theta}(\log \pi_{\theta}(y) - \log \mu(y))^2] / 2. \label{eq: reverse_kl_result}
\end{align}
The first equation holds because of the definition of KL divergence. The second equation holds due to applying the product rule of differentiation. The third equation holds due to simple algebra, and the second term will then vanish because of the sum of the probabilities. The fourth equation holds because of simple algebra.
\end{proof}




\subsection{Proof of Proposition \ref{theo:forward_kl}}
\begin{proof}
$\pi$ is parametrized by $\theta$, then $\nabla_{\theta} D_{\text{KL}}(\mu||\pi)=  \mathbb{E}_{\mu}[\nabla_{\theta} \frac{\mu(y)}{\pi_{\theta}(y)}]$ because
\begin{align}
\nabla_{\theta} D_{\text{KL}}(\mu||\pi)
&= \nabla_{\theta} \sum_y \mu(y) \cdot (\log \mu(y) - \log \pi_{\theta}(y)) \\&= -\sum_y \mu(y) \nabla_{\theta} \log \pi_{\theta}(y) = -\sum_y \pi_{\theta}(y) \frac{\mu(y)}{\pi_{\theta}(y)} \nabla_{\theta} \log \pi_{\theta}(y) \nonumber \\
&= -\mathbb{E}_{\pi_{\theta}}\bigg[\frac{\mu(y) \nabla_{\theta} \log \pi_{\theta}(y)}{\pi_{\theta}(y)}\bigg] =  -\mathbb{E}_{\pi_{\theta}}\bigg[\frac{\mu(y) \nabla_{\theta} \pi_{\theta}(y)}{\pi_{\theta}(y)^2}\bigg] =   \mathbb{E}_{\pi_{\theta}}\bigg[\nabla_{\theta} \frac{\mu(y)}{\pi_{\theta}(y)}\bigg].
\end{align}
The first three equations hold due to the definition of forward KL divergence and simple algebra. The fourth equation comes from rewriting the forward KL following the first three equations. The fifth equation holds by taking the derivative of $\log \pi_\theta$. The sixth equation holds since $\frac{ \nabla_{\theta} \pi_{\theta}(y)}{\pi_{\theta}(y)^2}= \nabla_{\theta} \frac{-1}{\pi_{\theta}(y)}$.
\end{proof}

\subsection{Proof of Proposition \ref{theo:chisquare}}
\begin{proof}
$\pi$ is parametrized by $\theta$, $\nabla_\theta D_{\chi^2}(\pi_\theta(y)||\mu(y))=\mathbb{E}_{\pi_\theta}\left[\frac{\nabla_\theta \pi_\theta(y)}{\mu(y)}\right]$ since
\begin{align}
D_{\chi^2}(\pi_\theta(y)||\mu(y))&=\frac{1}{2}\sum_y \left(\frac{\pi_\theta(y)}{\mu(y)}-1\right)^2 \mu(y) =\frac{1}{2}\sum_y \frac{\pi_\theta(y)^2 - 2\pi_\theta(y)\mu(y) + \mu(y)^2}{\mu(y)} \nonumber \\
&=\frac{1}{2}\sum_y \frac{\pi_\theta(y)^2}{\mu(y)}+ C(\mu) =\frac{1}{2}\mathbb{E}_{\pi_\theta(y)}\left[\frac{\pi_\theta(y)}{\mu(y)}\right]+C,
\end{align}
where $C(\mu)$ is independent to $\theta$. The first two equations hold according to the definition of Chi-squared divergence. The third equation holds by separating the terms only related to $\mu$ and the term related to $\pi_\theta$. The fourth equation holds by rewriting the summation as the expectation.
\end{proof}

\section{Additional Related Work}
\paragraph{Preference Optimization.} Large Language Models (LLMs) recently have obtained remarkable capabilities to accomplish a range of tasks \citep{jiang2023mistral,dubey2024llama,deepseekai2025deepseekr1incentivizingreasoningcapability}, generating more desirable and helpful content following the user’s intention. One of the most important methods to align LLMs with human intentions is Reinforcement Learning from Human Feedback (RLHF), maximizing a preference-based reward penalized by a reverse KL regularization term of the LLM policy and a reference model \citep{christiano2017deep,ouyang2022training,rafailov2024direct,azar2024general,xiong2024iterative}. Since the reference model usually provides safer guidance for policy optimization \citep{munos2023nash}, this regularization is crucial in RLHF to prevent over-optimization, which has been extensively studied and extended beyond KL divergence  \citep{wang2023beyond,go2023aligning,huang2024correcting}. In this work, we instead study the regularization problems in self-play alignment.


\textbf{RLHF with General Preference Optimization (Self-Play Alignment).} \citet{azar2024general} introduced the first approach for optimizing LLM policy via general preference models. Nash-MD \citep{munos2023nash} pioneered the application of self-play to general preference optimization by framing it as a two-player game. Subsequent methods have either focused on learning the NE of the original unregularized game (e.g. \citep{swamy2024minimaximalist, wu2024self, rosset2024direct, wang2024magnetic}) or the NE of a reverse-KL-regularized preference optimization game (e.g. \citep{munos2023nash, calandriello2024human, zhang2024iterativenashpolicyoptimization}). In contrast, our work explores a broader class of divergence-based regularization techniques for self-play alignment.

Notably, our RSPO can generalize existing self-play methods. Unregularized self-play methods following the preference-based MWU can all be generalized by $\mathcal{L}_{\text{RSPO}}$ without external regularization, and thus can be regularized by simply adding regularization term to the loss functions. Based on the same exponential update rule as in SPPO, SPO \citep{swamy2024minimaximalist} is equivalent to updating policy with the loss in \Cref{eq:sppo_sp}. Magnetic Policy Optimization \citep{wang2024magnetic}, despite incorporating regularization in the policy update, periodically updates $\mu=\pi_t$. Consequently, it inherently follows MWU while incorporating multiple policy updates within each iteration, following 


\textbf{Online iterative RLHF.} Iterative alignment method incorporates a reliable reward or preference model—including self-play—functions as a self-improving framework by iteratively generating new data using models and optimizing policies based on this data \citep{schulman2017proximal, ouyang2022training, bai2022training, touvron2023llama, dong2024rlhf}. Moreover, extending powerful offline methods such as DPO to iterative frameworks has led to significant performance gains \citep{xu2023some, liu2023statistical, viethoangtranduong, dong2024rlhf, calandriello2024human, pang2024iterative, xiong2024iterative, guo2024direct, tajwar2024preference, cen2024value, xie2024exploratory}. In contrast, our work investigates general preference optimization through self-play from a game-theoretic perspective, shifting the objective from conventional RL optimization to the computation of NE.\looseness=-1

\section{Additional Details}
In this section, we provide additional details of this paper, including the algorithm descriptions of self-play alignment methods, a summarizing table for generalizing existing methods, and our implementation of regularizations.
\subsection{Self-Play Alignment Algorithm} 
\Cref{alg:selfplay} shows the overall self-play alignment process. Note that we are sampling $K$ responses per each prompt and obtain pair-wise preferences amongst them for training.
\begin{algorithm}[H]
\caption{Self-Play Alignment}
\label{alg:selfplay}
\begin{algorithmic}
  \STATE \textbf{Input:} LLM $\pi_\theta$, preference model ${\mathbb{P}}$, number of iterations $T$, reference policy $\mu$, loss function for policy update conditioned on utility function $U$: $\mathcal{L}(\theta; U)$, sample size $K$.
  \STATE \textbf{Initialize:}  $\pi_0 =\mu$.   
  \FOR{$t \in [T]$}
    \STATE Sample prompts and responses: $x \sim \mathcal{X}$, $y_{1:K} \sim \pi_t$
    \STATE Get pair-wise preferences $u_{ij}={\mathbb{P}}(y_i \succ y_j),\ \forall i,j \in [K]$
    \STATE Update policy parameters $\theta= \arg \min_\theta \mathcal{L}(\theta; U)$, $U=[u_{ij}] \in \mathbb{R}^{K\times K}$
    \STATE $\pi_{t+1} = \pi_\theta$
  \ENDFOR
  \STATE \textbf{Output:} Last-iterate policy $\pi_T$.
\end{algorithmic}
\end{algorithm}
Specifically, the policy is first initialized as $\pi_0=\mu$. Then in each iteration $t$, the opponent is set to be the last-iterate policy $\pi_t$ (the reason why it is called self-play), and the responses are sampled from $\pi_t$ (Line 4). The pairwise preferences of the sampled responses are collected using the preference model $\mathbb{P}$ (Line 5). The policy parameters are updated by minimizing a specified loss function $\mathcal{L}(\theta; \mathbb{P})$ based on preferences over responses (Line 6). The loss function $\mathcal{L}(\theta; \mathbb{P})$ is dependent on the inherent online learning method. The main difference between these methods is the choice of loss function $\mathcal{L}(\theta; \mathbb{P})$ applied to the policy update. 

\subsection{Generalizing Existing Methods}
\label{section: generalising existing methods}
\Cref{table:main} shows how the existing methods of self-play alignment can be generalized without external regularization. The algorithms introduced below share the same loss structure as in \Cref{eq:RSPO}, while their differences present in the update direction $G$, baseline $B$ and the preference model.
\begin{table*}[h]
    
    \centering
    \begin{tabular}{c|c|c|c} 
    
\toprule
Loss & Update Direction ($G$) & Baseline ($B$) & Preference Model  \\ 
\midrule
$\mathcal{L}_\text{SPPO}$ \citep{wu2024self} & $\mathbb{P}(y \succ \pi_t)$ &  $0.5$ &  $\mathbb{P}(y \succ y')$ \\
$\mathcal{L}_\text{OMD}$ \citep{munos2023nash} & $\mathbb{P}(y \succ \pi_t) - \tau \log \frac{\pi_t(y)}{\mu(y)}$ & Est. & $\mathbb{P}_{\tau}(y \succ y')$  \\
$\mathcal{L}_\text{Nash-MD}$ \citep{munos2023nash} &$\mathbb{P}^{\mu}(y \succ \pi_t) - \tau \log \frac{\pi_t(y)}{\mu(y)}$ & $0.5$ & $\mathbb{P}_{\tau}(y \succ y')$  \\
\bottomrule
\end{tabular}
\caption{Self-play losses $\mathcal{L}_{\text{RSPO}}$ generalizes different self-play policy optimization methods. $\mathbb{P}^{\mu}(y \succ \pi_t) = \mathbb{P}(y \succ \pi^{\mu}_t)$, $\pi^{\mu}_t$ is the geometric mixture of $\pi_t$ and $\mu$. We abbreviate the estimated baseline that reduce the variance of $G$ the most as est.. $\mathbb{P}_{\tau}(y \succ y') = \mathbb{P}(y \succ y') -\tau \log \frac{\pi_{\theta}(y)}{\mu(y)} + \tau \log \frac{\pi'(y')}{\mu(y')}$ is the regularized preference model.}
\label{table:main}
\end{table*}

\subsection{Implementation of Regularization}
\label{sec:implement_reg}
In practice, accurately estimating the gradient of the regularizer is essential, as many commonly used divergence measures are defined as expectations over $\pi_\theta$. The estimation of divergences has been extensively studied and widely applied in various domains \citep{rubenstein2019practical}. For completeness, in this section, we introduce the regularization methods investigated in this study, including Reverse KL, Forward KL, and Chi-Square Divergence.

We begin by deriving the estimation of the Reverse KL divergence based on the following proposition.
\begin{proposition}
Reverse KL divergence satisfies:
\begin{align}
\nabla_\theta D_{\textit{KL}}(\pi_\theta||\mu)=\mathbb{E}_{y \sim \pi_\theta}[{\nabla_\theta (\log \pi_\theta(y) - \log \mu(y))^2}].
\end{align}
\label{theo:reverse_kl}
\end{proposition}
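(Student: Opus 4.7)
The plan is to compute $\nabla_\theta D_{\text{KL}}(\pi_\theta \| \mu)$ directly from the definition of KL divergence as a sum over responses, then rewrite it as the gradient of a squared quantity in expectation.

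First, I would expand $D_{\text{KL}}(\pi_\theta \| \mu) = \sum_y \pi_\theta(y)(\log \pi_\theta(y) - \log \mu(y))$ and apply the product rule of differentiation. This produces two contributions: one in which $\nabla_\theta$ falls on the outer factor $\pi_\theta(y)$, and one in which it falls on the inner $\log \pi_\theta(y)$ (the $\log \mu(y)$ term is inert in $\theta$). The second contribution simplifies using $\pi_\theta(y)\,\nabla_\theta \log \pi_\theta(y) = \nabla_\theta \pi_\theta(y)$, and then vanishes after summing over $y$, since $\sum_y \nabla_\theta \pi_\theta(y) = \nabla_\theta \sum_y \pi_\theta(y) = \nabla_\theta 1 = 0$.

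For the surviving term, I would apply the log-derivative (score function) trick in reverse to convert $\sum_y \nabla_\theta \pi_\theta(y) \cdot (\log \pi_\theta(y) - \log \mu(y))$ into $\mathbb{E}_{y \sim \pi_\theta}\!\left[(\log \pi_\theta(y) - \log \mu(y))\,\nabla_\theta(\log \pi_\theta(y) - \log \mu(y))\right]$. Recognizing the integrand via the chain rule as $\tfrac{1}{2}\nabla_\theta(\log \pi_\theta(y) - \log \mu(y))^2$ yields the claimed identity.

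There is no serious obstacle here; the only subtle step is noticing that the ``inner'' derivative term from the product rule telescopes to zero because $\pi_\theta$ is a normalized distribution. Once that observation is made, the rest is a mechanical application of the log-derivative trick and the chain rule, and the final form matches the proposition.
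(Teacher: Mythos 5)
Your proposal is correct and follows essentially the same route as the paper's proof: expand the definition, apply the product rule, observe that the normalization term $\sum_y \nabla_\theta \pi_\theta(y)$ vanishes, and then use the score-function trick and chain rule to obtain the squared form. Note that your derivation (correctly) produces a factor of $\tfrac{1}{2}$ in front of $\nabla_\theta(\log\pi_\theta(y)-\log\mu(y))^2$, which the paper's own derivation also obtains but which is missing from the proposition statement itself — a typo in the paper rather than a gap in your argument.
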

\vspace{-.5cm}
According to Proposition \ref{theo:reverse_kl}, we can estimate the divergence with $\mathbb{E}_{y \sim \pi_\theta}[{(\log \pi_\theta(y) - \log \mu(y))^2}]$.

We employ two distinct approaches to estimate the forward KL divergence. The first method utilizes importance sampling, referred to as IS-For. KL, and is derived based on the following proposition.
\begin{proposition} The gradient of forward KL divergence satisfies that 
\begin{align}
\nabla_{\theta} D_{\text{KL}}(\mu||\pi_\theta)=  \mathbb{E}_{y \sim \pi_{\theta}}[\nabla_{\theta} {\mu(y)}/{\pi_{\theta}(y)}].
\end{align}
\label{theo:forward_kl}
\vspace{-.5em}
\end{proposition}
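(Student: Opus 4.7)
The plan is to expand the definition of the forward KL divergence, differentiate, and then use an importance-sampling change of measure to rewrite the resulting expectation over $\mu$ as one over $\pi_\theta$. Since $D_{\text{KL}}(\mu \| \pi_\theta) = \sum_y \mu(y) \log \mu(y) - \sum_y \mu(y) \log \pi_\theta(y)$, the first term is independent of $\theta$ and vanishes upon differentiation, leaving $\nabla_\theta D_{\text{KL}}(\mu \| \pi_\theta) = -\sum_y \mu(y) \nabla_\theta \log \pi_\theta(y)$.

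Next I would apply the log-derivative identity $\nabla_\theta \log \pi_\theta(y) = \nabla_\theta \pi_\theta(y) / \pi_\theta(y)$ and multiply and divide by $\pi_\theta(y)$ to switch the sampling distribution: $-\sum_y \mu(y) \nabla_\theta \log \pi_\theta(y) = -\sum_y \pi_\theta(y) \cdot \frac{\mu(y)\,\nabla_\theta \pi_\theta(y)}{\pi_\theta(y)^2} = -\mathbb{E}_{y \sim \pi_\theta}\!\left[\frac{\mu(y)\,\nabla_\theta \pi_\theta(y)}{\pi_\theta(y)^2}\right]$. This is the importance-sampling step that makes the estimator implementable from samples of $\pi_\theta$, which is what the ``IS-For.\ KL'' terminology in the paper is pointing at.

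Finally, I would recognize the integrand as a total derivative: since $\mu(y)$ does not depend on $\theta$, $\nabla_\theta \frac{\mu(y)}{\pi_\theta(y)} = \mu(y)\, \nabla_\theta \frac{1}{\pi_\theta(y)} = -\frac{\mu(y)\,\nabla_\theta \pi_\theta(y)}{\pi_\theta(y)^2}$. Substituting this identity back gives $\nabla_\theta D_{\text{KL}}(\mu \| \pi_\theta) = \mathbb{E}_{y \sim \pi_\theta}\!\left[\nabla_\theta \frac{\mu(y)}{\pi_\theta(y)}\right]$, as claimed.

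There is no real obstacle here beyond being careful with the bookkeeping: the argument is entirely parallel in structure to the reverse-KL computation in Proposition~\ref{theo:reverse_kl}, with the only essential new ingredient being the importance-sampling reweighting by $\mu(y)/\pi_\theta(y)$ needed because the outer expectation in $D_{\text{KL}}(\mu\|\pi_\theta)$ is taken under $\mu$ rather than $\pi_\theta$. The minor subtlety worth flagging is that this identity (and the practical estimator derived from it) requires $\pi_\theta(y) > 0$ on the support of $\mu$, which in practice motivates the hard clipping of the ratio $\mu/\pi_\theta$ noted in the experimental section.
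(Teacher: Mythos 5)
Your proposal is correct and follows essentially the same route as the paper's own proof: expand the definition, discard the $\theta$-independent entropy term, apply the log-derivative identity with a multiply-and-divide by $\pi_\theta(y)$ to change the measure to $\pi_\theta$, and recognize the integrand as $\nabla_\theta\bigl(\mu(y)/\pi_\theta(y)\bigr)$. Your added remark about requiring $\pi_\theta(y)>0$ on the support of $\mu$ (motivating the clipping) is a sensible observation but not part of the paper's argument.
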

Therefore, we can estimate the forward KL divergence by leveraging the expectation $\mathbb{E}_{y \sim \pi_{\theta}}[{\mu(y)}/{\pi_{\theta}(y)}]$ to estimate the forward KL. Notably, to mitigate the risk of gradient explosion, we apply gradient clipping with a maximum value of $10$.

The second method for forward KL is a direct estimation of $D_{\text{KL}}(\mu||\pi_\theta)$. To achieve this, we resample responses from the reference policy $\mu$ using the same prompts from the training dataset, constructing a reference dataset. The KL divergence is then estimated directly based on its definition by uniformly drawing samples from this reference dataset. A key advantage of this approach is that it eliminates the need for importance sampling, as each policy update iteration only requires samples from $\pi_t$.

Similarly, we estimate the Chi-Square divergence using $\mathbb{E}_{y \sim \pi_\theta}\left[{\pi_\theta(y)}/{\mu(y)}\right]$, based on the following proposition. Due to the presence of the ratio term, Chi-Square divergence estimation also necessitates gradient clipping to prevent instability, for which we set a clip value of $10$.
\begin{proposition} Chi-Square divergence has gradient \begin{align}
\nabla_\theta D_{\chi^2}(\pi_\theta||\mu)=\mathbb{E}_{y \sim \pi_\theta}\left[{\nabla_\theta \pi_\theta(y)}/{\mu(y)}\right].
\end{align}
\label{theo:chisquare}
\end{proposition}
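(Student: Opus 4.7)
}
The plan is to differentiate the closed-form expression for the Chi-Square divergence directly, and then rewrite the resulting sum as an expectation under $\pi_\theta$. First I would expand the definition
\(
D_{\chi^2}(\pi_\theta \| \mu) = \tfrac{1}{2}\sum_y \mu(y)\bigl(\pi_\theta(y)/\mu(y) - 1\bigr)^2
\)
to obtain $\tfrac{1}{2}\sum_y \pi_\theta(y)^2/\mu(y) - \sum_y \pi_\theta(y) + \tfrac{1}{2}\sum_y \mu(y)$. The last term is independent of $\theta$ and the middle term equals $1$ since $\pi_\theta$ is a probability mass function, so its $\theta$-gradient vanishes.

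Next I would differentiate the surviving term $\tfrac{1}{2}\sum_y \pi_\theta(y)^2/\mu(y)$ with respect to $\theta$, giving $\sum_y \pi_\theta(y)\,\nabla_\theta \pi_\theta(y)/\mu(y)$. Since $\pi_\theta(y)$ is precisely the sampling weight, this sum is exactly $\mathbb{E}_{y\sim \pi_\theta}\bigl[\nabla_\theta \pi_\theta(y)/\mu(y)\bigr]$, which is the claimed identity. As a sanity check, I would verify that the REINFORCE-style derivation — writing $\nabla_\theta \pi_\theta(y) = \pi_\theta(y)\nabla_\theta \log \pi_\theta(y)$, which gives the implementable form $\mathbb{E}_{y\sim\pi_\theta}[(\pi_\theta(y)/\mu(y))\nabla_\theta \log \pi_\theta(y)]$ — collapses to the same expression modulo the usual $\sum_y \nabla_\theta \pi_\theta(y) = 0$ identity, confirming consistency.

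There is no substantive obstacle here: the only subtlety is that one must be careful when writing $\nabla_\theta \mathbb{E}_{y\sim\pi_\theta}[\cdot]$, since the sampling distribution itself depends on $\theta$, so I would prefer to differentiate the explicit sum $\sum_y \pi_\theta(y)^2/\mu(y)$ rather than pull the gradient under an expectation and risk dropping the score-function correction. For implementation purposes, I would also note that dividing by $\mu(y)$ can cause numerical blow-up when $\mu(y)$ is small, which motivates the gradient clipping mentioned by the authors in Appendix \ref{sec:implement_reg}, but this is a practical rather than a mathematical issue.
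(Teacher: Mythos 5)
Your proposal is correct and follows essentially the same route as the paper: expand the definition to isolate $\tfrac{1}{2}\sum_y \pi_\theta(y)^2/\mu(y)$ plus $\theta$-independent terms, then differentiate the explicit sum and rewrite it as an expectation under $\pi_\theta$. Your explicit remark about differentiating the sum rather than pulling the gradient inside the $\theta$-dependent expectation is a worthwhile clarification of a step the paper leaves implicit, but the argument is the same.
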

\vspace{-.5cm}
We also explore the linear combination of different regularization functions to leverage their complementary effects, as in offline RLHF \citep{huang2024correcting}. The previously established propositions for estimating divergences can still be used in the combined regularization method.

Apart from the flexibility and simplicity of applying different regularization methods, RSPO can generalize existing self-play methods, including the unregularized ones, which enables regularizing off-the-shelf self-play methods in practice with \textit{no change} on their original loss functions or hyperparameters, directly adding an external regularization term to their loss functions.

We then provide the hyperparameters of regularization temperature for each regularizer in our experiments:
\begin{table}[h]
\centering
\begin{tabular}{|l|c|}
\hline
\textbf{Divergence} & \textbf{Parameter(s)} \\
\hline
Reverse KL (Rev. KL) & $\lambda = 0.5$ \\
Forward K (For. KL) & $\lambda = 1.0$ \\
Chi-Squared ($\chi^2$) & $\lambda = 0.1$ \\
Importance-Sampling Forward KL (IS-For.) & $\lambda = 0.1$ \\
Forward and Reverse KL (IS-For.+Rev. KL) & $\lambda_1 = 0.1$, $\lambda_2 = 0.5$ \\
\hline
\end{tabular}
\vspace{1em}
\caption{Divergences and their corresponding $\lambda$ parameters.}
\label{tab:divergences}
\end{table}

\section{Additional Experiments}
In this section, we provide additional experiments, including two synthetic motivating examples and additional results on language tasks.
\subsection{Regularization in Game Solving}
\label{append:reg_game}

\begin{wrapfigure}{r}{0.52\textwidth}
\vspace{-3em}
  \centering    \includegraphics[width=\linewidth]{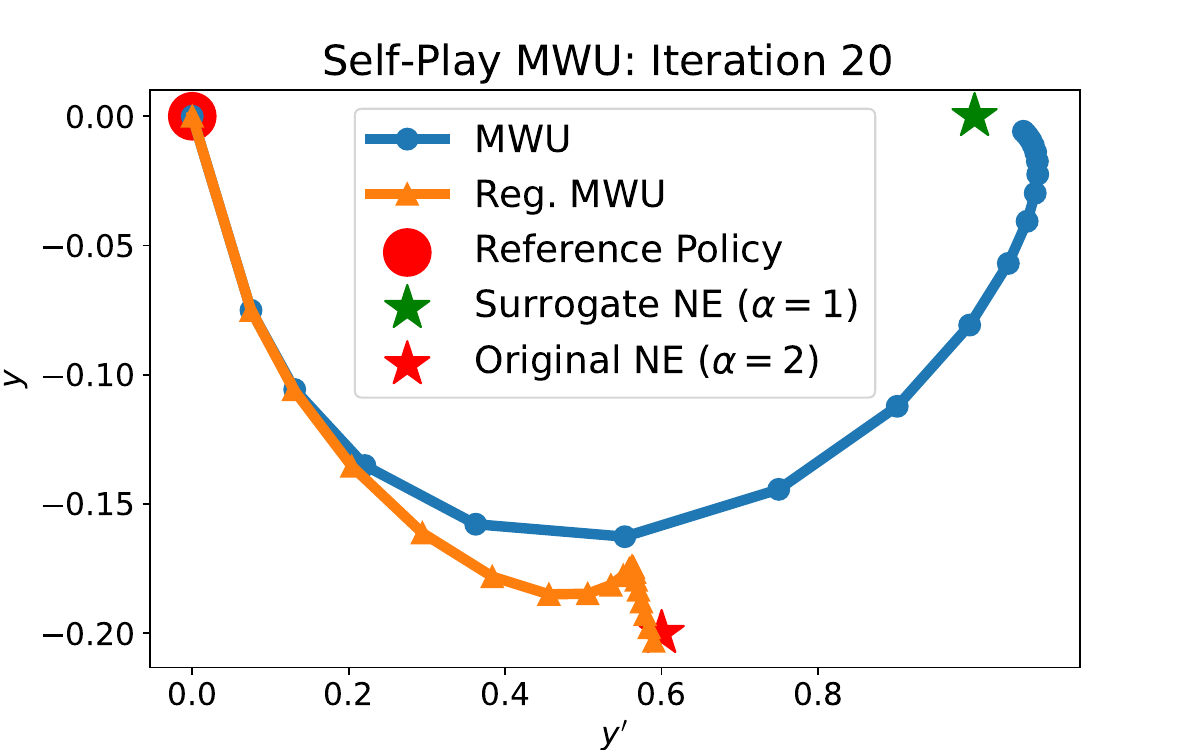}
    \caption{Motivating Example: 20 iterations of MWU and regularized MWU with the same learning rate to solve saddle point problem $ \max_y \min_{y'} f(y, y', \alpha)$, where $f(y, y'; \alpha)=\frac{\alpha}{2}{y'}^2 + (y'-1)(y-1) - \frac{\alpha}{2}y^2$, first introduced in \citep{sokota2022unified}. We assume that we only have access to a misspecified (surrogate) preference $f(y, y'; \alpha=1)$, while the ground truth human preference is $f(y, y'; \alpha=2)$. }
    \label{fig:simpletoy}
    \vspace{-.3cm}
\end{wrapfigure}

The regularization in the preference model is not used in all game-theoretic self-play methods. Here we investigate the necessity of regularization and offer a motivating example in Figure \ref{fig:simpletoy}, a saddle point solving problem $\min_x \max_y \frac{\alpha}{2} x^2 + (x-1)(y-1) - \frac{\alpha}{2}  y^2$. There exists a reference point as the initial values of $x$ and $y$. We assume that both reference point and the Nash Equilibrium (NE) of the surrogate preference model (Surrogate NE) are close to the original NE but on different sides of the original NE.

Typically, the surrogate preference/reward models are not positively related to the reference policy. Thus, it is a reasonable abstracted example of NLHF by treating the reference point as reference policy and surrogate NE as the optimal policy obtained by optimizing the surrogate preference/reward. The results of the $20$ iterations self-play MWU with an early stopping show that regularization can be used to prevent reward over-optimization (reaching surrogate NE). A well-tuned regularization leads to faster convergence to the unknown original NE. Thus, regularization can be effective in preventing over-optimization in self-play.

\subsection{Diversity on 2D Example}
\label{append:2d_diversity}
We offer an analysis of our method compared to unregularized self-play (SPPO) on a 2D example in Figure \ref{fig:RSPO_toy}. The area with a darker color is assigned a higher reward value. We use the preference defined by the $L^2$ norm between two actions. We also set the reference policy to be uniform. According to the figure, the unregularized method tends to converge to a single point on the manifold of the large reward. While regularized methods have diverse sampled actions.    
\begin{figure*}[h]
    \centering
    \includegraphics[width=\textwidth]{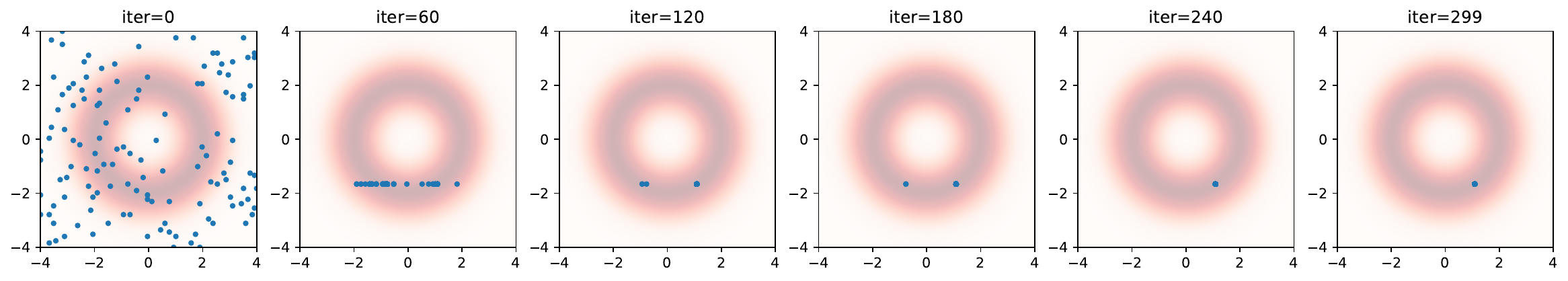}
    \includegraphics[width=\linewidth]{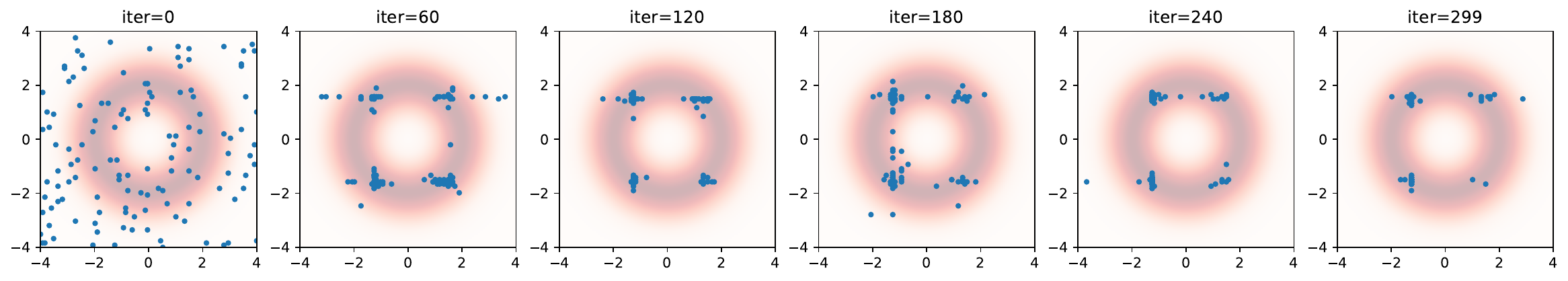}
    \caption{Samples in a 2D example of different iterations of SPPO (top) and RSPO (bottom) with external forward KL regularization to a uniform random reference policy. SPPO added simple external regularization that can generate multi-modal policies.}
    \label{fig:RSPO_toy}
    \vspace{-.6cm}
\end{figure*} 

\subsection{More Results on AlpacaEval-2.0 and PairRM}
\label{section: more results on alpacaeval-2.0}
In \Cref{fig:RSPO_all_reg_append} and \Cref{tab:alpacaeval}, we present further results of RSPO evaluated using AlpacaEval. As presented in \Cref{fig:RSPO_all_reg_append}, mixed regularization of the forward and reverse KL resulted in the best performance, while its average response length did not exceed that of reverse KL-only regularization. When compared to various other well-known baselines including GPT-4 and Claude, RSPO-trained model initialized from Mistral-7B shows notable performance, outperforming GPT-4 0314 and Llama 3 70B Instruct in LCWR. When response lengths are ignored, our RSPO-trained 7B model even outperforms Claude 3 Opus.
\begin{figure}[H]
    \centering
    \includegraphics[width=.48\linewidth]{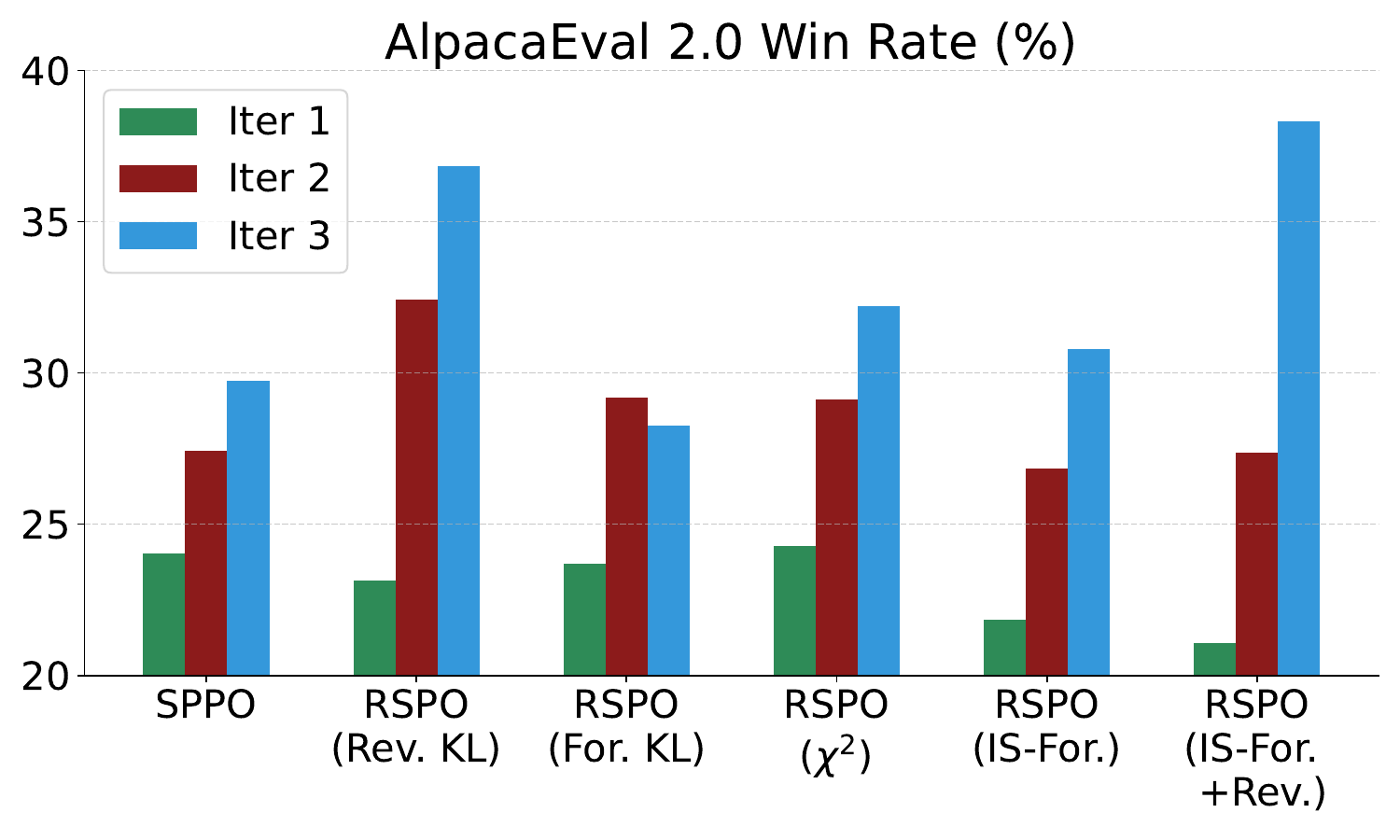}
    \includegraphics[width=.48\linewidth]{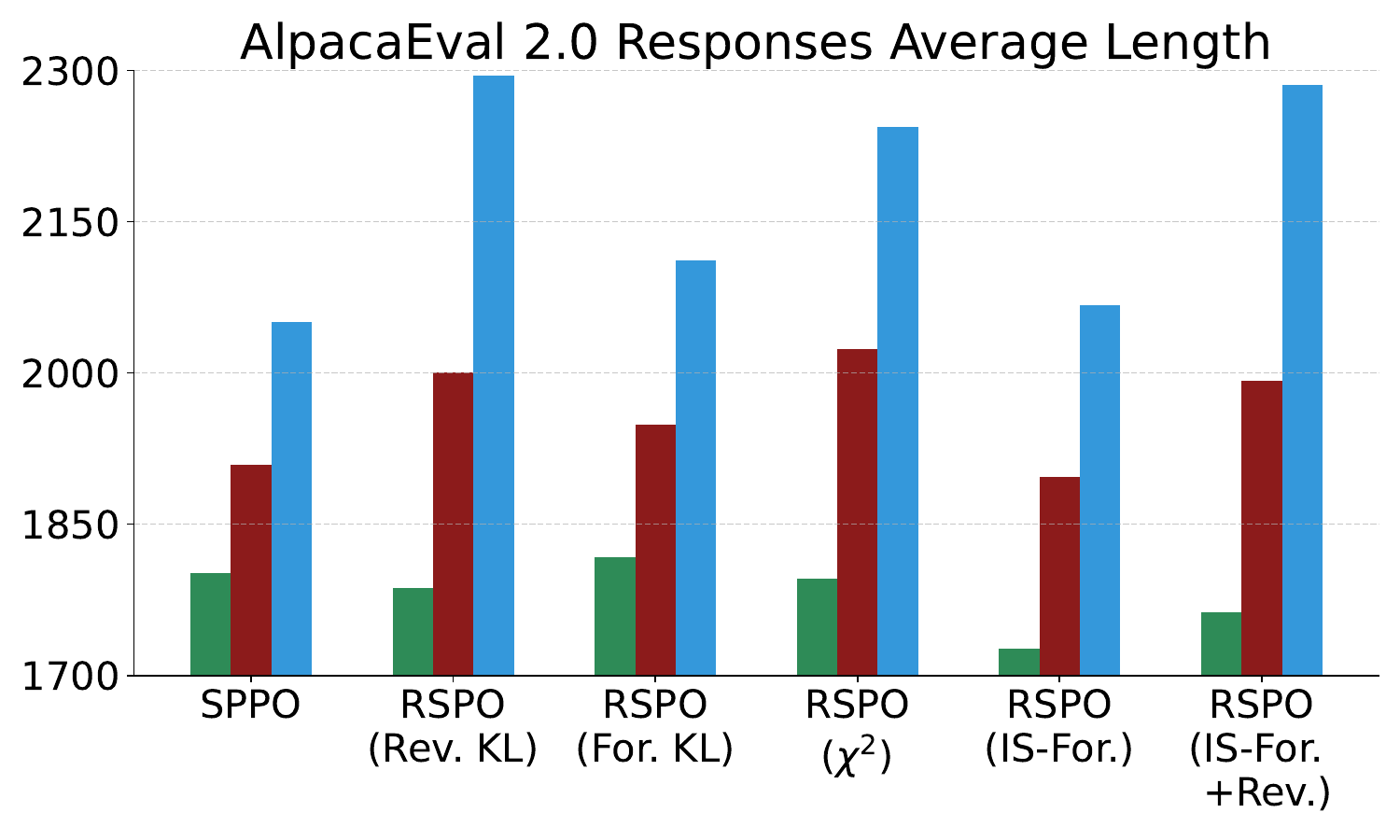}
    \caption{Win rates and the average length of SPPO and RSPO with different regularization methods. From left to right, regularization methods: Reverse KL, Forward KL, Chi-Squared, Importance-Sampling Forward KL, Importance-Sampling Forward, and Reverse KL linear combination.}
    \label{fig:RSPO_all_reg_append}
    \vspace{-.6cm}
\end{figure}

\begin{table}[h]
    \centering
    \resizebox{.55\textwidth}{!}{
    \begin{tabular}{l|c|c|c|c}
    \toprule
    \textbf{Regularization Temperature} & \textbf{Iter} & \textbf{LCWR (\%)} & \textbf{WR (\%)} & \textbf{Length} \\
    \midrule
    forward: 0.1 reverse: 0.5 & 1 & 23.16 & 21.06 & 1763 \\
    forward: 0.1 reverse: 0.5 & 2 & 27.91 & 27.38 & 1992 \\
    forward: 0.1 reverse: 0.5 & 3 & \textbf{35.44} & 38.31 & 2286 \\
    forward: 0.01 reverse: 0.5 & 1 & 24.63 & 22.57 & 1793 \\
    forward: 0.01 reverse: 0.5 & 2 & 28.21 & 28.56 & 2006 \\
    forward: 0.01 reverse: 0.5 & 3 & 32.24 & 36.77 & 2411 \\
    \bottomrule
    \end{tabular}}
    \resizebox{.42\textwidth}{!}{
    \begin{tabular}{lcc}
        \toprule
        \begin{tabular}{c}
        \multirow{2}{*}{Model}
        \end{tabular} & \multicolumn{2}{c}{AlpacaEval 2.0} \\
        & LC. Win Rate & Win Rate \\
        \midrule
        GPT-4 Turbo & 50.0 & 50.0 \\
        Claude 3 Opus & 40.5 & 29.1 \\
        \rowcolor{blue!15} Mistral-7B-RSPO Iter3 & 35.44 & 38.31 \\
        GPT-4 0314 & 35.3 & 22.1 \\
        Llama 3 70B Instruct & 34.4 & 33.2 \\
        GPT-4 0613 & 30.2 & 15.8 \\
        Mistral Medium & 28.6 & 21.9 \\ \rowcolor{gray!15} Mistral-7B-SPPO Iter3 & 28.5 & 31.0 \\
        \bottomrule
    \end{tabular}}
    \vspace{.5em}
    \caption{\textbf{Left:} AlpacaEval-2.0 performance of \textbf{RSPO} with different regularization temperatures. \textbf{Right:}  AlpacaEval-2.0 performance comparison with popular models. Our model, Mistral-7B-RSPO Iter3, outperforms GPT-4 0314 and Llama 3 70B Instruct in LCWR. When only win rate is considered, our model even outperforms Claude 3 Opus. }
    \label{tab:alpacaeval}
    \vspace{-1em}
\end{table}

\begin{table}[h!]
    \centering
    \resizebox{.38\textwidth}{!}{
    \begin{tabular}{cccc}
    \toprule
    \textbf{Iter} & \textbf{LCWR (\%)} & \textbf{WR (\%)} & \textbf{Length} \\
    \midrule
    1 & 26.36 & 24.04 & 1802 \\
    2 & 28.38 & 27.43 & 1909 \\
    3 & 29.17 & 29.75 & 2051 \\
    4 & 28.45 & 30.20 & 2257 \\
    5 & 27.93 & 30.11 & 2301 \\
    6 & 28.03 & 30.99 & 2435 \\
    7 & 25.46 & 28.25 & 2471 \\
    8 & 22.94 & 28.26 & 2691 \\
    9 & 24.47 & 28.57 & 3402 \\
    \bottomrule
    \end{tabular}}
    \resizebox{.6\textwidth}{!}{
    \begin{tabular}{lc}
    \toprule
    Methods & \textbf{RSPO (IS-For.+Rev.) Iter3} Win Rate \\
    \midrule
    \textbf{Snorkel (Iterative-DPO)} & 0.55 \\ \textbf{SPPO Iter3}  & 0.57 \\ 
    \textbf{SimPO} & 0.50 \\
    \bottomrule
    \end{tabular}}
    \vspace{.5em}
    \caption{\textbf{Left:} \textbf{SPPO replication} Iteration-wise LCWR, WR, and Length results. Overoptimization exists according to the results. \textbf{Right:} Pairwise win rate of RSPO on Ultrafeedback validation set rated by pairRM. RSPO has higher win rates against all the baselines.}
    \label{tab:sppo_replicate}
\end{table}

\section{Others}
In this section, we provide other details including compute resources, societal impacts and limitations.
\subsection{Compute Resources}
\label{sec:compute}
We conduct experiments on $8 \times$A100 80GB for training and single A100 80GB for evaluation.

\subsection{Societal Impacts}
\label{sec:impacts}
This study introduces a novel framework for fine-tuning large language models through self-play, incorporating regularization toward a reference model. Ethical considerations may emerge if the reference model exhibits harmful behaviors, or if the preference model used for policy updates inadvertently assigns higher ratings to harmful outputs. However, drawing on prior research, we find no evidence that the proposed approach poses direct negative societal impacts.

\subsection{Limitations}
\label{append:limitations}

A theoretical limitation lies in the nature of the regularization term $R$ which is required to be relatively convex with respect to entropy (Assumption \ref{assumption:reg}). Both reverse KL divergence and $\chi^2$ divergence satisfy this property, whereas forward KL divergence does not. This discrepancy is evident in performance metrics such as raw win rates. Interestingly, forward KL has a beneficial side effect of reducing response length. To leverage the length reduction and reconcile the decreasing win rate, we adopt a linear combination of forward and reverse KL divergences—an approach that not only satisfies the relative convexity condition but also exploits the complementary strengths of each to achieve improved control over response length while maintaining theoretical soundness.

Empirically, our experiments are conducted using lightweight general preference models (pairRM-0.4B) and base models (Mistral-7B), which impose limitations on model expressiveness. Nevertheless, the focus of this work is on the development of a principled fine-tuning algorithm, which has already demonstrated substantial empirical improvements under these constraints. Extension to larger and more capable models is straightforward within the proposed framework.

\end{document}